%% file: main.tex
\documentclass{article}

\usepackage{tex/fancyhdr,tex/natbib}
\usepackage{tex/iclr2027_conference,times}
\iclrfinalcopy
\input{tex/math_commands.tex}

\usepackage{amsmath,amssymb,amsthm,mathtools}
\usepackage{booktabs,tabularx,multirow}
\usepackage{wrapfig}
\usepackage{placeins}
\usepackage{algorithm}
\usepackage{algpseudocode}
\usepackage{graphicx}
\usepackage{xcolor}
\usepackage{colortbl}
\usepackage{xspace}
\usepackage{microtype}
\usepackage{enumitem}
\usepackage{tikz}
\usepackage{tcolorbox}
\tcbuselibrary{breakable}
\usepackage{fontawesome}
\usetikzlibrary{arrows.meta,positioning,calc,fit,backgrounds}
\usepackage{pgfplots}
\pgfplotsset{compat=1.18}
\usepackage[bottom]{footmisc}
\usepackage{hyperref}
\usepackage{url}

\definecolor{ink}{HTML}{263238}
\definecolor{nativeblue}{HTML}{3568A8}
\definecolor{methodgreen}{HTML}{17856A}
\definecolor{riskred}{HTML}{C64B45}
\definecolor{gold}{HTML}{D39B2A}
\definecolor{methodrow}{HTML}{F2F2F2}
\hypersetup{colorlinks=true,citecolor=nativeblue,linkcolor=nativeblue,urlcolor=nativeblue}

\newcommand{\papertablefont}{\scriptsize}
\newcommand{\method}{\textup{\textsc{HeirCommit}}\xspace}
\newcommand{\system}{\textup{\textsc{EvoHeir}}\xspace}
\newcommand{\recovery}{\textup{\textsc{Experience Recovery}}\xspace}

\xspaceaddexceptions{]\%}
\setlist[itemize]{leftmargin=*,topsep=2pt,itemsep=1pt}
\setlist[enumerate]{leftmargin=*,topsep=2pt,itemsep=1pt}
\newtheorem{proposition}{Proposition}
\newtheorem{theorem}{Theorem}
\newtheorem{corollary}{Corollary}

\input{tex/results_macros.tex}

\renewcommand{\AblFullMath}{31.82}

\title{Selection Is Not Learning:\\Choosing Model Futures}

\author{Haoyi Yang\thanks{Equal contribution. Either author may list their name first on their CV.} \quad Kshitij Mishra\footnotemark[1] \quad Nils Lukas \quad Salem Lahlou \\
Mohamed bin Zayed University of Artificial Intelligence, Abu Dhabi, UAE \\
\texttt{\{haoyi.yang,kshitij.mishra,nils.lukas,salem.lahlou\}@mbzuai.ac.ae}}

\begin{document}
\maketitle

\begin{abstract}
\input{sections/abstract}
\end{abstract}

\input{sections/introduction}
\input{sections/related_work}
\input{sections/problem_setup}
\input{sections/method}
\input{sections/theory}
\input{sections/experiments}
\input{sections/limitations_conclusion}

\bibliography{references}
\bibliographystyle{tex/iclr2027_conference}

\clearpage
\appendix
\raggedbottom %
\numberwithin{equation}{section}
\section*{Appendix}
\input{appendix/experimental_setup}
\input{appendix/detailed_method}
\input{appendix/theoretical_analysis}
\FloatBarrier
\input{appendix/task_selection_inheritance_audits}
\input{appendix/extended_self_evolution_results}
\section{Disentangling parameter and experience inheritance}
\label{app:parameter-experience-inheritance}
\input{appendix/parameter_experience_inheritance}
\FloatBarrier
\input{appendix/compute_performance}
\input{appendix/prompt_templates}
\end{document}

%% file: tex/math_commands.tex
\usepackage{amsmath,amsfonts,bm}

\def\eqref#1{equation~\ref{#1}}

\def\1{\bm{1}}

\DeclareMathAlphabet{\mathsfit}{\encodingdefault}{\sfdefault}{m}{sl}
\SetMathAlphabet{\mathsfit}{bold}{\encodingdefault}{\sfdefault}{bx}{n}

\newcommand{\E}{\mathbb{E}}

%% file: tex/results_macros.tex
\newcommand{\AblFullMath}{31.82}
\newcommand{\AblFullMathSpread}{0.55}
\newcommand{\AblFullGeneral}{30.52}
\newcommand{\AblFullGeneralSpread}{0.35}
\newcommand{\AblLineageMath}{30.77}
\newcommand{\AblLineageMathSpread}{0.57}
\newcommand{\AblLineageGeneral}{29.99}
\newcommand{\AblLineageGeneralSpread}{0.35}
\newcommand{\AblRandomMath}{30.54}
\newcommand{\AblRandomMathSpread}{0.44}
\newcommand{\AblRandomGeneral}{29.87}
\newcommand{\AblRandomGeneralSpread}{0.43}
\newcommand{\AblCurrentMath}{30.85}
\newcommand{\AblCurrentMathSpread}{0.69}
\newcommand{\AblCurrentGeneral}{30.04}
\newcommand{\AblCurrentGeneralSpread}{0.35}
\newcommand{\AblNoConfirmMath}{29.80}
\newcommand{\AblNoConfirmMathSpread}{1.13}
\newcommand{\AblNoConfirmGeneral}{29.33}
\newcommand{\AblNoConfirmGeneralSpread}{0.65}

\newcommand{\AgentFalseFrontier}{52.2\xspace}
\newcommand{\RZeroFalseFrontier}{42.0\xspace}

\newcommand{\AuditAgentSelected}{8.60}

\newcommand{\AuditAgentControl}{0.20}

\newcommand{\AuditRZeroSelected}{4.10}

\newcommand{\AuditRZeroControl}{-0.10}

\newcommand{\AuditUpdateCount}{96}
\newcommand{\AuditNonpositiveCount}{60}

\newcommand{\AuditNonpositivePercent}{62.5}

\newcommand{\SkillHeirNativeCoverage}{100.00\%\xspace}
\newcommand{\SkillHeirNativeUseful}{100.00\%\xspace}
\newcommand{\SkillHeirNativeHarmful}{57.81\%\xspace}
\newcommand{\SkillHeirCoverage}{39.06\%\xspace}
\newcommand{\SkillHeirRandomUseful}{39.06\%\xspace}
\newcommand{\SkillHeirRandomHarmful}{57.81\%\xspace}
\newcommand{\SkillHeirUseful}{77.78\%\xspace}
\newcommand{\SkillHeirHarmful}{16.00\%\xspace}
\newcommand{\SkillHeirNativeGainMilli}{$-3.47$\xspace}
\newcommand{\SkillHeirRandomGainMilli}{$-1.36$\xspace}
\newcommand{\SkillHeirGainMilli}{$+2.49$\xspace}
\newcommand{\SkillHeirRandomVsNativeMilli}{$+2.11$\xspace}
\newcommand{\SkillHeirDeltaMilli}{$+5.96$\xspace}

\newcommand{\AgentHeirNativeCoverage}{100.00\%\xspace}
\newcommand{\AgentHeirNativeUseful}{100.00\%\xspace}
\newcommand{\AgentHeirNativeHarmful}{46.88\%\xspace}
\newcommand{\AgentHeirCoverage}{51.56\%\xspace}
\newcommand{\AgentHeirRandomUseful}{51.56\%\xspace}
\newcommand{\AgentHeirRandomHarmful}{46.88\%\xspace}
\newcommand{\AgentHeirUseful}{82.35\%\xspace}
\newcommand{\AgentHeirHarmful}{15.15\%\xspace}
\newcommand{\AgentHeirNativeGainMilli}{$+2.80$\xspace}
\newcommand{\AgentHeirRandomGainMilli}{$+1.44$\xspace}
\newcommand{\AgentHeirGainMilli}{$+14.35$\xspace}
\newcommand{\AgentHeirRandomVsNativeMilli}{$-1.36$\xspace}
\newcommand{\AgentHeirDeltaMilli}{$+11.55$\xspace}

\newcommand{\RZeroHeirNativeCoverage}{100.00\%\xspace}
\newcommand{\RZeroHeirNativeUseful}{100.00\%\xspace}
\newcommand{\RZeroHeirNativeHarmful}{76.56\%\xspace}
\newcommand{\RZeroHeirCoverage}{23.44\%\xspace}
\newcommand{\RZeroHeirRandomUseful}{23.44\%\xspace}
\newcommand{\RZeroHeirRandomHarmful}{76.56\%\xspace}
\newcommand{\RZeroHeirUseful}{86.67\%\xspace}
\newcommand{\RZeroHeirHarmful}{13.33\%\xspace}
\newcommand{\RZeroHeirNativeGainDisplay}{$-23.17$\xspace}
\newcommand{\RZeroHeirRandomGainDisplay}{$-5.43$\xspace}
\newcommand{\RZeroHeirGainDisplay}{$+10.19$\xspace}
\newcommand{\RZeroHeirRandomVsNativeDisplay}{$+17.74$\xspace}
\newcommand{\RZeroHeirDeltaDisplay}{$+33.36$\xspace}

%% file: sections/abstract.tex
Self-evolving language models
train on tasks they generate, so every round must decide which updated
model the next round inherits.
Current loops such as R-Zero, Agent0, and Skill Self-Play make this
decision through the task, admitting tasks that look learnable and inheriting
every resulting update.
We identify a \textbf{selection-to-update gap} in these loops, where a
task that looks learnable need not produce a useful update.
Most updates on verified tasks fail to make correct answers more likely,
yet these loops inherit them all.
\textbf{\method} freezes the
updated model and uses fresh evaluation questions to decide whether to inherit
it or retain the current model. \textbf{\system} separates parameter inheritance from
experience inheritance, using current and archived experience to repair
failures and preserve successes before final confirmation. Our analysis
characterizes selection-induced optimism and decomposes regret into selection,
absorption, and confirmation errors.
\method{} cuts
the harmful share of accepted updates by
68--83\%
relative to random inheritance at the same acceptance rate.
Across four backbones, \system{} achieves the best mathematics and
general-reasoning averages, raising the math average by
5.3\%
relative to the strongest baseline, Agent0.
The framework lets useful experience
persist even when the model that produced it is not inherited.

%% file: sections/introduction.tex
\begin{figure}[!b]
  \vspace{-8pt}
  \centering
  \input{figures/teaser-tikz}
  \vspace{-4pt}
  \caption[Selecting a learnable task does not guarantee learning]{\textbf{Selecting a learnable task does not guarantee learning, so \system judges model futures and keeps experience.}
  Top: task-centric loops inherit every update, yet 60 of 96 updates on verified tasks show no gain, and 42--52\% of admitted tasks leave the frontier under fresh responses.
  Bottom: \system explores, screens a recipient, repairs its failures with current and archived references while keeping its successes, and commits the result only if it beats the current model on fresh probes. The archive keeps experience either way.
  \method cuts the harmful share of commits on every host, and over ten rounds \system keeps climbing to $+5.7$ math points while R-Zero peaks at round 8 and slips (means over four backbones). All plots show measured data.}
  \label{fig:motivation}
\end{figure}

\section{Introduction}

Self-evolving language models generate tasks and train on their own outputs
\citep{huang2026rzeroselfevolvingreasoningllm,xia2026agent0unleashingselfevolvingagents,huang2026skillselfplaypushingfrontier}. Each inherited update
changes both the learner and the curriculum of later rounds, allowing progress
or regressions to accumulate~\citep{shao2026agentmisevolveemergentrisks}.
Self-training can even
raise a model's target metric and then collapse it within the same training
campaign~\citep{lin2026selfimprovementselfregress}.
The central decision is
therefore \emph{which learning outcomes to carry forward}.

Task-side selection leaves a \textbf{selection-to-update gap}: pass rate,
disagreement, and frontier proximity describe pre-training behavior, not the
transfer produced by an optimizer update. In our audits, 42--52\% of native
Agent0/R-Zero admissions leave the frontier under independent responses.
Even with exact-valid tasks, \AuditNonpositiveCount{} of \AuditUpdateCount{}
Skill Self-Play (Skill-SP) updates have nonpositive correct-target NLL gain.
Learning opportunities must ultimately be judged by the states they produce.

\textbf{\method} judges realized transitions: it freezes a successor,
compares it with the current model (the root) on fresh probes, and inherits
that state or retains root. Fresh evidence separates confirmation from the
noise that favored the proposal during selection.

The deeper question is the \emph{granularity of inheritance}. SOAR uses actual
student improvement to reward a teacher~\citep{sundaram2026teachingmodelsteachthemselves}, and APE
evaluates candidate fine-tunings before accepting updates~\citep{marín2025apeselectivefinetuningacceptance}.
Pre-commit skill verification~\citep{shang2026selfevolutionbackfiresprecommitgating}
and anytime-valid certificates~\citep{sengupta2026selfevolvingagentsanytimevalid}
gate agent updates while the base model stays frozen.
Whole-state acceptance nevertheless couples a branch's gains and losses.
\emph{Must a useful experience donor also supply the next learner's parameters?}
Separating these choices opens a learning path beyond whole-state acceptance
(Figure~\ref{fig:motivation}).

\textbf{\system} separates parameter inheritance from experience inheritance.
Screening picks the recipient that supplies the next learner's parameters,
either a branch endpoint or the current model. \textbf{\recovery} then
repairs this recipient's failures with current and archived reference
solutions while preserving its successes.

\method confirms \emph{the completed post-absorption state} against root.
Qualified experience enters the archive regardless of this decision:
parameter inheritance can stop without discarding material that helps a later
recipient.

We evaluate \method{} on the native updates of three self-evolution
hosts, and \system on four backbones over ten rounds.
\method cuts the harmful share of accepted updates by 68--83\%
relative to random inheritance at the same acceptance rate, on every host.
\system achieves the highest mathematics and general-reasoning averages on all
four backbones and beats Agent0 on 45 of 48 benchmark--backbone pairs.
References from eight of ten donors weaker than the current model still
improve the recipient.

Our contributions are:
\begin{enumerate}\setlength{\itemsep}{0pt}
\item We identify the \textbf{selection-to-update gap} and
prove that pre-update behavior alone cannot identify an update's
value.
Reusing screening evidence to accept a branch pays a union bound over branches,
which fresh confirmation removes.
\item We introduce \textbf{\method}, a source-independent rule that commits a
frozen successor only when it beats the current model on fresh probes. On the
native updates of R-Zero, Agent0, and Skill-SP, it turns negative pool gain
positive on two of the three hosts.
\item We develop \textbf{\system}, which decides separately which parameters the
next learner inherits and which experience it absorbs. Across four backbones it
raises the base model's math average by 7.9--30.1\%, the most of
any compared method.
\item We show that \textbf{experience outlives the model that produced it}.
Recovery beats direct fine-tuning on the same 64 references by 1.93 pp on
average, and experience from rounds whose model was rejected adds 0.66 pp
beyond committed-round history.
\end{enumerate}

%% file: figures/teaser-tikz.tex
\input{figures/eh-palette}
\input{figures/teaser-icons}
\input{figures/teaser-plots}
\begin{tikzpicture}[x=1cm,y=1cm,text=ehInk,line cap=round,line join=round,
  font=\rmfamily\fontsize{6.4}{7.4}\selectfont,
  flow/.style={-{Latex[length=1.5mm,width=1.1mm]},draw=ehInk,line width=.75pt},
  nextround/.style={-{Latex[length=1.5mm,width=1.1mm]},draw=ehInk2,line width=.65pt,dash pattern=on 2.5pt off 1.5pt},
  box/.style={draw=ehInk,line width=.6pt,rounded corners=2pt,fill=white},
  pill/.style={draw=black!25,fill=black!4,rounded corners=1.5pt,line width=.4pt,minimum width=1.62cm,
               minimum height=.27cm,inner sep=0pt,font=\rmfamily\fontsize{5.8}{6.4}\selectfont},
  head/.style={font=\rmfamily\bfseries\fontsize{6.6}{7.6}\selectfont},
  row/.style={font=\rmfamily\bfseries\fontsize{7.6}{8.6}\selectfont,anchor=west},
  note/.style={font=\rmfamily\itshape\fontsize{5.8}{6.6}\selectfont,align=center},
  stepno/.style={circle,fill=ehAccent,text=white,inner sep=0pt,minimum size=.30cm,
               font=\rmfamily\bfseries\fontsize{5.6}{6}\selectfont}]
\path[use as bounding box] (0,-.05) rectangle (13.97,6.02);

\node[row] at (0,5.84) {Task-centric self-evolution};
\node[row,text=ehHarm] at (8.85,5.84) {The selection-to-update gap};
\ehAgent{.70}{4.62}{.95}{0}
\node[note,font=\rmfamily\fontsize{6}{7}\selectfont] at (.70,3.97) {current\\[-1pt]model $M_t$};
\draw[flow] (1.14,4.62)--(1.58,4.62);
\node[box,minimum width=1.95cm,minimum height=1.52cm] (sel) at (2.58,4.62) {};
\node[head] at (2.58,5.20) {Task selection};
\node[pill] at (2.58,4.86) {pass rate};
\node[pill] at (2.58,4.54) {disagreement};
\node[pill] at (2.58,4.22) {frontier proximity};
\draw[flow] (3.56,4.62)--(4.00,4.62);
\node[box,minimum width=1.65cm,minimum height=1.52cm] (nat) at (4.83,4.62) {};
\node[head] at (4.83,5.20) {Native training};
\node[note] at (4.83,4.95) {RL / SFT / \dots};
\ehPage{4.72}{4.42}{.85}\ehPage{4.90}{4.34}{.85}
\draw[flow] (5.66,4.62)--(6.70,4.62);
\node[note,text=ehHarm,font=\rmfamily\itshape\fontsize{5.5}{6.3}\selectfont] at (6.24,4.86) {inherit every\\[-1pt]update};
\ehAgent{7.18}{4.62}{.95}{1}
\node[note,font=\rmfamily\fontsize{6}{7}\selectfont] at (7.18,3.97) {next model $M_{t+1}$\\[-1pt]\textcolor{ehHarm}{may degrade}};
\draw[nextround] (7.18,3.62)--(7.18,3.40)--(.70,3.40)--(.70,3.62);
\node[fill=white,inner sep=1.5pt,note] at (3.85,3.40) {next round};
\draw[black!18,line width=.5pt] (8.62,3.30)--(8.62,5.95);
\node[font=\rmfamily\fontsize{6.2}{7}\selectfont,align=center] at (10.40,5.40) {\textcolor{ehHarm}{\textbf{60 of 96}} updates on verified\\[-1pt]tasks show no gain};
\ehPlotNoGain{9.35cm,3.78cm}
\node[font=\rmfamily\fontsize{6.2}{7}\selectfont,align=center] at (12.80,5.40) {\textcolor{ehHarm}{\textbf{42--52\%}} of admitted\\[-1pt]tasks leave the frontier};
\ehPlotFrontier{12.08cm,3.78cm}

\draw[black!22,line width=.5pt,dash pattern=on 2pt off 1.5pt] (0,3.12)--(13.97,3.12);

\node[row,text=ehAccent] at (0,2.90) {\system\ (ours): judge model futures, keep experience};
\node[row,text=ehAccent] at (8.85,2.90) {Results};
\ehAgent{.70}{1.66}{.95}{0}
\node[note,font=\rmfamily\fontsize{6}{7}\selectfont] at (.70,1.02) {current\\[-1pt]model $M_t$};
\draw[flow] (1.14,1.66)--(1.40,1.66);
\foreach \x/\n/\t/\s in {2.05/1/Explore/native branches, 3.61/2/Screen/pick a recipient} {
  \node[box,minimum width=1.30cm,minimum height=1.05cm] at (\x,1.66) {};
  \node[stepno] at (\x-.47,2.05) {\n};
  \node[head,anchor=west] at (\x-.29,2.05) {\t};
  \node[note] at (\x,1.55) {\s};}
\draw[flow] (2.70,1.66)--(2.96,1.66);
\draw[flow] (4.26,1.66)--(4.52,1.66);
\node[box,minimum width=1.64cm,minimum height=1.05cm,draw=ehAccent,line width=.75pt] at (5.34,1.66) {};
\node[stepno] at (4.70,2.05) {3};
\node[head,anchor=west] at (4.82,2.05) {Recover};
\node[note] at (5.34,1.52) {repair failures,\\[-1pt]keep successes};
\ehBadgeSnow{5.97}{2.03}{.85}
\draw[flow] (6.16,1.66)--(6.38,1.66);
\node[box,minimum width=1.38cm,minimum height=1.05cm] at (7.07,1.66) {};
\node[stepno] at (6.56,2.05) {4};
\node[head,anchor=west] at (6.72,2.05) {Confirm};
\node[note] at (7.07,1.52) {fresh probes\\[-1pt]vs.\ $M_t$};
\draw[flow] (7.76,1.66)--(7.97,1.66);
\ehAgent{8.28}{1.66}{.72}{3}
\node[note,font=\rmfamily\fontsize{5.8}{6.6}\selectfont,align=center] at (8.28,1.07) {commit only\\[-1pt]if better};
\ehArchive{5.33}{.47}{.90}
\node[note,text=ehArchiveInk,anchor=west,align=left] at (5.72,.47) {archive keeps experience,\\[-1pt]even if $M_t$ is kept};
\draw[-{Latex[length=1.3mm,width=1mm]},draw=ehArchiveInk,line width=.6pt,dash pattern=on 2pt off 1.2pt] (5.33,.72)--(5.33,1.12);
\draw[-{Latex[length=1.3mm,width=1mm]},draw=ehArchiveInk,line width=.6pt,dash pattern=on 2pt off 1.2pt] (2.05,1.12)--(2.05,.47)--(4.98,.47);
\node[note,text=ehArchiveInk] at (3.45,.62) {qualified responses};
\node[note,text=ehArchiveInk,anchor=east] at (5.27,.93) {references};
\draw[nextround] (8.28,.80)--(8.28,.12)--(.70,.12)--(.70,.62);
\node[fill=white,inner sep=1.5pt,note] at (1.55,.12) {next round};
\draw[black!18,line width=.5pt] (8.85,.00)--(8.85,2.72);
\node[font=\rmfamily\fontsize{6.1}{6.9}\selectfont,align=center] at (10.22,2.40) {\method:\\[-1pt]harmful commits (\%)};
\ehPlotHarm{9.30cm,.45cm}
\node[font=\rmfamily\fontsize{6.1}{6.9}\selectfont,align=center] at (12.62,2.40) {\system:\\[-1pt]ten-round math gain};
\ehPlotTraj{11.80cm,.45cm}
\end{tikzpicture}

%% file: sections/related_work.tex
\section{Related Work}
\label{sec:related}

\paragraph{Self-evolution and update acceptance.}
STaR, Self-Instruct, and SPIN learn from self-generated supervision
\citep{zelikman2022star,wang2023self,chen2024self}; \citet{alami2024investigating} regularize SPIN with a KL term and fictitious play across iterations.
Task generation uses self-play in Absolute Zero and SPELL
\citep{zhao2025absolutezeroreinforcedselfplay,yang2026spellselfplayreinforcementlearning} and student feedback in
DataEnvGym~\citep{khan2025dataenvgymdatagenerationagents}. R-Zero, Agent0, and Skill-SP adapt
curricula~\citep{huang2026rzeroselfevolvingreasoningllm,xia2026agent0unleashingselfevolvingagents,huang2026skillselfplaypushingfrontier}; SOAR and
Actor-Curator use realized improvement~\citep{sundaram2026teachingmodelsteachthemselves,gu2026actorcuratorcoadaptivecurriculumlearning}. Earlier automatic curricula select tasks by measured learning progress~\citep{graves2017automated,matiisen2020teacher}, a signal that \citet{willems2020mastering} argue is flawed, proposing mastering rates instead.
At the agent level, harness updates need not bring benefit either~\citep{lin2026harnessupdatingnotharnessbenefit}.
Update acceptance appears in AlphaGo Zero, but not AlphaZero
\citep{silver2017mastering,silver2017masteringchess}, candidate fine-tuning in
APE~\citep{marín2025apeselectivefinetuningacceptance}, and nonparametric agent evolution in
PACE~\citep{shawn2026paceanytimevalidacceptancetests}.
Agent-side gates verify distilled skills before commit or certify adapter and harness updates while the base model stays frozen~\citep{shang2026selfevolutionbackfiresprecommitgating,sengupta2026selfevolvingagentsanytimevalid}.
\system separately chooses the recipient and
experience donors, then confirms the post-absorption successor.

\paragraph{Privileged-context distillation.}
GKD trains on student-generated prefixes~\citep{ICLR2024_5be69a58}.
SDFT, SDPO, and OPSD condition a self-teacher on
demonstrations, feedback, or privileged context
\citep{shenfeld2026selfdistillationenablescontinuallearning,hübotter2026reinforcementlearningselfdistillation,zhao2026self},
while OPCD internalizes contextual knowledge~\citep{ye2026onpolicycontextdistillationlanguage}.
Recovery combines recipient-specific repair and preservation with cross-branch
references, fixed recipient trajectories, and post-absorption confirmation.

\paragraph{Experience persistence.}
Replay retains past experience~\citep{rolnick2019experience}; genetic distillation
combines policies~\citep{gangwani2018policyoptimizationgeneticdistillation}. Online experiential learning and
SPEE internalize accumulated experience~\citep{ye2026onlineexperientiallearninglanguage,ren2026selfimprovinglargelanguagemodels}.
\system reassesses experience through the current recipient and retains it
independently of parameter acceptance: a rejected state can remain a useful donor.

%% file: sections/problem_setup.tex
\section{Self-Evolution Environment}
\label{sec:setup}

\begin{wrapfigure}[14]{r}{0.42\textwidth}
  \vspace{-12pt}
  \centering
  \includegraphics[width=0.42\textwidth]{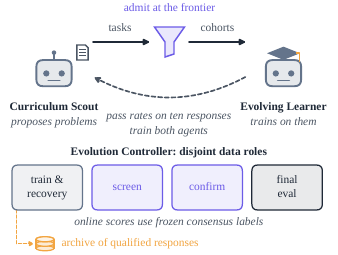}
  \caption{\textbf{Self-evolution environment.} Two agents co-evolve through the
  Evolution Controller.}
  \label{fig:environment}
  \vspace{-10pt}
\end{wrapfigure}
Self-evolution runs in rounds, and each round ends with an inheritance decision.
The Curriculum Scout proposes problems, the Evolution Controller admits those at
the Evolving Learner's frontier, and native training updates both agents in an
R-Zero-style loop~\citep{huang2026rzeroselfevolvingreasoningllm}, as
Figure~\ref{fig:environment} shows.
A round starts from the learner's \emph{root state} $S_t=(\theta_t,\omega_t)$,
its parameters and resumable training state, and trains two native branches
from it.
Admission uses pre-update behavioral observations $O_t$, such as the pass rate
over ten learner responses, and the branch endpoints form
$\mathcal C_t^{\rm nat}$.
Qualified responses from every branch enter a bounded archive $M_t$.
Training and Recovery, screening, confirmation and final evaluation use disjoint
question pools, and online scores use frozen consensus labels, as
Appendix~\ref{app:consensus-protocol} details.

\paragraph{Value and inheritance.}
For a declared task distribution $P_t$ and utility $q$, the value of a state is
its expected gain over the root,
\begin{equation}
 V_t(S)=\E_{z\sim P_t}[q(S;z)-q(S_t;z)], \qquad V_t(S_t)=0.
 \label{eq:value}
\end{equation}
Capability uses exact correctness or sampled pass@1, and the update audits use
the reduction in correct-target negative log-likelihood (NLL), which is not an
accuracy difference.
For binary correctness, value splits into acquired and lost successes,
\begin{equation}
 V_t(S)=\Pr(q(S)=1,q(S_t)=0)-\Pr(q(S)=0,q(S_t)=1).
 \label{eq:gain-loss}
\end{equation}
The \emph{selection-to-update gap} is the failure of task-side admission to
identify positive value.
Screening picks a recipient $S_t^{\rm sel}$ from
$\mathcal C_t^{\rm nat}\cup\{S_t\}$, Recovery turns it into a proposal $S_t^+$,
and confirmation keeps $S_t$ or adopts $S_t^+$ as $S_{t+1}$.
Transition regret compares the inherited state with the best state constructed,
\begin{equation}
 R_t=\max_{S\in\mathcal C_t^{\rm nat}\cup\{S_t,S_t^+\}}V_t(S)-V_t(S_{t+1}),
 \label{eq:regret}
\end{equation}
and absorption gain $G_t^{\rm abs}=V_t(S_t^+)-V_t(S_t^{\rm sel})$ measures the
change from recipient to proposal.

\paragraph{Hosts, backbones, and benchmarks.}
\method is audited on the native updates of R-Zero, Agent0 and Skill-SP,
which propose tasks through a challenger, a tool-using curriculum agent and a
skill controller,
respectively~\citep{huang2026rzeroselfevolvingreasoningllm,xia2026agent0unleashingselfevolvingagents,huang2026skillselfplaypushingfrontier}.
The self-evolution study runs every method for ten rounds with three training
seeds from \texttt{Qwen3-4B-Base} and
\texttt{Qwen3-8B-Base}~\citep{yang2025qwen3technicalreport},
\texttt{OLMo-3-7B}~\citep{olmo2026olmo3}, and the instruction-tuned
\texttt{Gemma-4-E4B-it}~\citep{gemmateam2026gemma4technicalreport}.
It evaluates seven mathematics and five general-reasoning
benchmarks~\citep{hendrycks2021measuringmathematicalproblemsolving,lewkowycz2022solvingquantitativereasoningproblems,he2024olympiadbenchchallengingbenchmarkpromoting,ballon2026benchmarkssaturatemodelgets,mayilvahanan2025mathbeyondbenchmarkrlexpand,pteam2025supergpqascalingllmevaluation,wang2024mmluprorobustchallengingmultitask,kazemi2025bigbenchextrahard,white2025livebenchchallengingcontaminationlimitedllm,ma2025korbenchbenchmarkinglanguagemodels},
listed with their metrics in Appendix~\ref{app:extended-results}.

%% file: sections/method.tex
\section{\system: Exploration, Experience Recovery, and \method}
\label{sec:method}
\begin{figure}[t]
  \centering
  \input{figures/method-tikz}
  \caption[One \system round]{\textbf{One \system round.} Steps follow Algorithm~\ref{alg:evoheir}, and the archive keeps qualified experience whether or not the round commits.}
  \label{fig:method}
\end{figure}

\system gives exploration two routes into the next learner, selected
parameters and absorbed experience. Each round runs the four steps of
Figure~\ref{fig:method}. It (1) explores native branches from the root,
(2) screens them with the root for a parameter recipient, (3) recovers current
and archived experience into that recipient, and (4) confirms the result against
the root on fresh probes.

\subsection{Explore and Screen a Recipient}

On the base loop of Section~\ref{sec:setup}, \system builds
multi-branch exploration, screening of realized updates, experience absorption,
and independent confirmation. Native endpoints and
any predeclared training-prefix checkpoints form $\mathcal C_t^{\rm nat}$.
Each branch supplies a resumable state and responses that qualify as experience. An \emph{Experience Donor} contributes the latter without
having to supply inherited parameters or remain available as a teacher model.

The controller freezes native candidates before shared screening on
$P_t^{\rm scr}$. Paired root-relative scores select the recipient
$S_t^{\rm sel}$ from the candidates and root, with ties going to the root and then to a fixed branch order. Screening chooses the parameter starting point for absorption.

\subsection{Recover Experience without Inheriting the Donor}

\paragraph{Experience across branches and rounds.}
Current exploration records $E_t$ and a bounded archive $M_t$ supply candidate
problems and references with their generating checkpoint and verification
trace. Qualification checks completeness, answer validity, and problem-level
de-duplication. Recovery material is disjoint from screening, confirmation,
and final evaluation. Appendix~\ref{app:archive-inheritance} specifies archive
admission and retention.

\paragraph{Let the recipient identify repair and preservation.}
Recovery obtains reference-free recipient attempts by exact-checkpoint cache
reuse or one rollout batch. Under the same answer-equivalence rule, a failed attempt with a qualified reference enters $D_t^{\rm repair}$, and a successful attempt enters $D_t^{\rm keep}$. The groups are disjoint and specific to the recipient, so an archived success becomes repair material when the current recipient cannot reproduce it. A predeclared budget bounds selection from both groups.

\paragraph{A frozen recipient teaches a recipient-initialized student.}

The frozen teacher and student in Figure~\ref{fig:method} are two copies of
the selected Evolving Learner, both initialized from $\theta_t^{\rm sel}$.
For a recipient-generated trajectory $y$ on problem $x$, all three
distributions below predict the next token at the same prefix $y_{<j}$.
The repair teacher also sees a donor's reference $r$,
$T^{\rm repair}_j=\operatorname{sg}[p_{\theta_t^{\rm sel}}(\cdot\mid x,r,y_{<j})]$,
while the keep teacher and the student see only the problem,
$T^{\rm keep}_j=\operatorname{sg}[p_{\theta_t^{\rm sel}}(\cdot\mid x,y_{<j})]$ and
$P_{\theta,j}=p_\theta(\cdot\mid x,y_{<j})$.
The student matches the repair teacher on failed attempts and the keep teacher
on successful ones,
\begin{equation}
 \mathcal L_{\rm rec}(\theta)=
 \left\langle d(T^{\rm repair}_j,P_{\theta,j})\right\rangle_{D_t^{\rm repair}}
 +\lambda_{\rm keep}
 \left\langle d(T^{\rm keep}_j,P_{\theta,j})\right\rangle_{D_t^{\rm keep}},
 \label{eq:recovery-loss}
\end{equation}
where $d$ is the forward KL divergence over the full vocabulary and
$\langle\cdot\rangle_D$ averages response-token terms per question over group
$D$, normalized by the repair-plus-keep batch size.
We use $\lambda_{\rm keep}=1$, LoRA rank 8 and at most 16 updates, as
Appendix~\ref{app:implementation-configuration} lists.
The recipient trajectories and groups stay fixed during absorption, which
produces $S_t^+$, and an empty experience pool leaves the recipient unchanged.
Only the resulting learner is deployed, since teacher conditioning is a
training operation.

\subsection{\method: Confirm the Final Proposal}

\method evaluates the state that would actually persist, including the
changes produced by absorption. The controller freezes the final proposal
before scoring the held-out confirmation pool $P_t^{\rm con}$. Its paired
root-relative value determines inheritance:
\begin{equation}
 S_{t+1}=
 \begin{cases}
 S_t^+,&\widehat V^{\rm con}_{t}(S_t^+)>\tau_t,\\
 S_t,&\text{otherwise}.
 \end{cases}
 \label{eq:heircommit}
\end{equation}
The reported runs use $\tau_t=0$, and a positive confidence margin gives the certified mode of Theorem~\ref{thm:commit}. Rejection retains root
without falling back to the recipient or testing another proposal. A proposal
identical to root is a zero-valued no-op. Later rounds receive fresh evidence.

\method{} does not depend on how candidates are built, so an external host can apply it to a single native update, and \system{} supplies construction and search.
Accepted absorbed weights are merged into the next native root, with a fresh
native optimizer and scheduler. Rejection restores the complete root state, including the scheduler and random generators. Qualified
experience is archived independently of this decision. Algorithm~\ref{alg:evoheir}
and the state-management details appear in Appendix~\ref{app:detailed-method}.

%% file: figures/method-tikz.tex
\input{figures/eh-palette}
\input{figures/teaser-icons}
\begin{tikzpicture}[x=1cm,y=1cm,text=ehInk,line cap=round,line join=round,
  font=\rmfamily\fontsize{6.4}{7.4}\selectfont,
  flow/.style={-{Latex[length=1.5mm,width=1.1mm]},draw=ehInk,line width=.7pt},
  flowacc/.style={flow,draw=ehAccent},
  arch/.style={-{Latex[length=1.3mm,width=1mm]},draw=ehArchiveInk,line width=.6pt,dash pattern=on 2pt off 1.2pt},
  stepno/.style={circle,fill=ehAccent,text=white,inner sep=0pt,minimum size=.31cm,
                 font=\rmfamily\bfseries\fontsize{5.8}{6}\selectfont},
  stage/.style={font=\rmfamily\bfseries\fontsize{7.2}{8}\selectfont,anchor=west},
  note/.style={font=\rmfamily\itshape\fontsize{5.8}{6.6}\selectfont,align=center,text=ehInk2},
  lab/.style={font=\rmfamily\fontsize{6}{7}\selectfont,align=center},
  chip/.style={rounded corners=1.5pt,line width=.55pt,minimum height=.30cm,inner sep=2pt,
               font=\rmfamily\fontsize{5.9}{6.5}\selectfont}]
\path[use as bounding box] (0,0) rectangle (13.97,5.05);
\foreach \x in {2.95,5.55,10.55} {\draw[black!15,line width=.5pt] (\x,.95)--(\x,4.95);}

\node[stepno] at (.18,4.78) {1}; \node[stage] at (.38,4.78) {Explore};
\ehAgent{.62}{2.75}{.85}{0}
\node[lab] at (.62,2.12) {root $S_t$};
\ehPage{1.33}{3.70}{.62}\ehPage{1.33}{1.80}{.62}
\node[note] at (1.33,4.18) {Scout tasks};
\draw[flow] (1.02,2.98)--(1.95,3.62);
\draw[flow] (1.02,2.52)--(1.95,1.88);
\ehAgent{2.35}{3.72}{.62}{0}\node[note] at (2.35,3.18) {branch 1};
\ehAgent{2.35}{1.80}{.62}{0}\node[note] at (2.35,1.26) {branch 2};
\node[lab] at (2.35,2.62) {$\mathcal C_t^{\rm nat}$};

\node[stepno] at (3.13,4.78) {2}; \node[stage] at (3.33,4.78) {Screen};
\node[note] at (4.25,4.32) {paired gain on $P_t^{\rm scr}$};
\foreach \y/\n/\w/\c in {3.62/$S_t$/.03/ehGrid, 2.82/b1/.62/ehGrid, 2.02/b2/1.08/ehAccent} {
  \ehAgent{3.38}{\y}{.46}{0}
  \fill[\c] (3.72,\y-.09) rectangle (3.72+\w,\y+.09);
  \node[lab,anchor=west,text=ehInk2] at (3.78+\w,\y) {\n};}
\draw[ehAccent,line width=.9pt] (3.38,2.04) circle (.33);
\node[lab,text=ehAccent,font=\rmfamily\bfseries\fontsize{6.2}{7}\selectfont] at (4.24,1.40) {recipient $S_t^{\rm sel}$};
\node[note] at (4.24,1.13) {ties favor the root};

\node[stepno] at (5.73,4.78) {3}; \node[stage] at (5.93,4.78) {Recover experience};
\ehArchive{6.05}{3.95}{.78}
\node[lab] at (6.05,3.52) {$E_t$, $M_t$};
\draw[flow] (6.40,3.95)--(6.84,3.95);
\ehAgent{7.28}{3.95}{.62}{0}\draw[ehAccent,line width=.8pt] (7.28,3.97) circle (.40);
\node[note] at (7.30,3.33) {attempts,\\[-1pt]no reference};
\draw[flow] (7.70,4.12)--(8.25,4.34);
\draw[flow] (7.70,3.78)--(8.25,3.56);
\node[chip,draw=ehHarm,fill=ehHarm!8,text=ehHarm,anchor=west] at (8.32,4.40) {failed $\rightarrow$ $D_t^{\rm repair}$};
\node[chip,draw=ehAccent,fill=ehAccent!8,text=ehAccent,anchor=west] at (8.32,3.50) {solved $\rightarrow$ $D_t^{\rm keep}$};
\begin{scope}[shift={(5.78,1.05)}]
  \draw[ehInk,line width=.55pt,rounded corners=1.5pt] (0,0) rectangle (4.30,1.90);
  \fill[black!4] (.01,1.48) rectangle (4.29,1.89);
  \draw[ehInk,line width=.4pt] (0,1.48)--(4.30,1.48) (0,.74)--(4.30,.74) (1.12,0)--(1.12,1.90) (2.78,0)--(2.78,1.90);
  \ehBadgeSnow{1.27}{1.685}{.72}
  \node[lab,font=\rmfamily\bfseries\fontsize{5.8}{6.6}\selectfont,anchor=west,inner sep=1pt] at (1.38,1.685) {frozen teacher};
  \node[lab,font=\rmfamily\bfseries\fontsize{5.8}{6.6}\selectfont] at (3.54,1.685) {LoRA student};
  \node[lab,text=ehHarm] at (.56,1.11) {$D_t^{\rm repair}$};
  \node[lab,text=ehAccent] at (.56,.37) {$D_t^{\rm keep}$};
  \node[lab] at (1.95,1.11) {$(x,\textcolor{ehArchiveInk}{r},y_{<j})$};
  \node[lab] at (1.95,.37) {$(x,y_{<j})$};
  \node[lab] at (3.54,1.11) {$(x,y_{<j})$};
  \node[lab] at (3.54,.37) {$(x,y_{<j})$};
\end{scope}
\node[note] at (7.93,.80) {both start from $\theta_t^{\rm sel}$ and match next-token distributions on the recipient's prefixes};
\draw[flowacc] (10.10,2.00)--(10.30,2.00)--(10.30,3.80)--(10.80,3.80);
\node[lab,text=ehAccent,font=\rmfamily\bfseries\fontsize{6.6}{7.4}\selectfont,fill=white,inner sep=1pt] at (10.30,2.90) {$S_t^+$};

\node[stepno] at (10.73,4.78) {4}; \node[stage] at (10.93,4.78) {Confirm};
\ehAgent{11.10}{3.80}{.62}{4}\node[lab,text=ehAccent] at (11.10,3.25) {frozen $S_t^+$};
\ehAgent{11.10}{2.25}{.62}{0}\node[lab] at (11.10,1.70) {root $S_t$};
\draw[flow] (11.52,3.70)--(11.98,3.18);
\draw[flow] (11.52,2.35)--(11.98,2.87);
\ehProbes{12.28}{3.02}{.95}
\node[note] at (12.36,2.40) {fresh $P_t^{\rm con}$};
\draw[flow] (12.52,3.12)--(12.86,3.45);
\draw[flow] (12.52,2.92)--(12.88,2.74);
\ehBadgeCheck{13.02}{3.58}{.95}
\node[lab,text=ehAccent,anchor=west] at (13.18,3.58) {commit};
\node[lab,text=ehAccent,anchor=west] at (13.18,3.30) {if $v>0$};
\node[lab,text=ehInk2] at (13.02,2.70) {$\circlearrowleft$};
\node[lab,text=ehInk2,anchor=west,align=left] at (13.18,2.62) {keep\\[-1pt]root};
\node[note] at (12.40,1.25) {$v$: paired gain of $S_t^+$ over $S_t$};

\draw[ehArchiveInk,fill=ehArchive!10,line width=.55pt,rounded corners=2pt] (.10,.08) rectangle (13.87,.55);
\ehArchive{.42}{.31}{.62}
\node[note,text=ehArchiveInk,anchor=west,font=\rmfamily\fontsize{6.1}{7}\selectfont] at (.78,.31)
  {archive $M_t$: qualified responses from every branch, kept whether or not the round commits, so a rejected round can still teach a later recipient};
\draw[arch] (2.35,1.02)--(2.35,.58);
\draw[arch,-] (5.42,.56)--(5.42,.63);
\draw[arch] (5.42,.96)--(5.42,3.90)--(5.74,3.90);
\end{tikzpicture}

%% file: sections/theory.tex
\section{What Can Be Identified about a Model Future?}
\label{sec:theory}

Within a round, write $V_k$ for the value of native candidate
$k$, with $K\geq1$, and $V_0=0$ for root. Screening selects
$k^*=\arg\max_{0\leq k\leq K}\widehat V_k^{\rm scr}$, with
$\widehat V_0^{\rm scr}=0$. Write $V^+=V_t(S_t^+)$ for the final proposal's value.
Proofs and the adaptive-round extension appear in
Appendix~\ref{app:theoretical-analysis}.

\begin{proposition}[Task-side non-identifiability]
\label{prop:identifiability}
For loss-reduction utility $q=-\ell$ and any $\Delta>0$, every persistence
rule based only on pre-update behavioral observations $O_t$ faces two
differentiable learning worlds with identical $O_t$, equally valid targets,
and the same native step size, but values $+\Delta$ and $-\Delta$.
Against the better of that update and root, its worst-case expected regret
is at least $\Delta/2$.
\end{proposition}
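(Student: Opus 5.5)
The proof is a two-point (Le Cam style) indistinguishability argument. The idea is to build two learning worlds that agree on everything a task-side rule can observe before the update but whose updates move the loss on held-out questions in opposite directions.

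\emph{Construction.} Take a scalar parameter $\theta\in\mathbb R$, a root $\theta_t=0$, and a single admitted training task whose loss $\ell_{\rm tr}(\theta)=\tfrac12(\theta-1)^2$ is the same in both worlds. The native update is one gradient step with a common step size $\eta>0$, so $\theta^+=\eta$ in both worlds. Pre-update behavioral observations $O_t$ are functions of the root's responses on admitted tasks (pass rates, disagreement, frontier proximity). Since the root and the training task coincide, $O_t$ is identical, and both worlds use the same valid correct target. The worlds differ only in the evaluation distribution $P_t$: world $+$ has evaluation loss $\ell_+(\theta)=\tfrac12(\theta-c)^2$ and world $-$ has $\ell_-(\theta)=\tfrac12(\theta+c)^2$, both smooth. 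Then
\[
V^\pm=\ell_\pm(0)-\ell_\pm(\eta)=\pm c\eta-\tfrac12\eta^2 .
\]
I would shift the quadratic constants (e.g.\ $\ell_-(\theta)=\tfrac12(\theta+c)^2-\eta^2$) or pick $c$ and $\eta$ so that the values are exactly $+\Delta$ and $-\Delta$. With $c$ free, any $\Delta>0$ is reachable at the fixed step size. The evaluation questions are disjoint from the task pool, so they never enter $O_t$. This is the step that needs the most care: I must state explicitly that $O_t$ is generated by the root on training tasks only, so that the $P_t$ difference is invisible to it. The paper's "equally valid targets" clause makes me keep the training target fixed and correct in both worlds.

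\emph{Regret bound.} A persistence rule based only on $O_t$ outputs the same (possibly randomized) acceptance probability $a\in[0,1]$ in both worlds. The comparator is $\max(V,0)$, i.e.\ the better of the update and the root.
\begin{itemize}
\item In world $+$, the regret is $(1-a)\Delta$.
\item In world $-$, the regret is $a\Delta$.
\end{itemize}
The worst case is $\max\{(1-a)\Delta,a\Delta\}\ge\Delta/2$, minimized at $a=1/2$. Averaging over a uniform prior on the two worlds gives the same bound for expected regret, which covers any internal randomness of the rule.

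Everything after the construction is routine arithmetic; the main obstacle is confirming that the two worlds genuinely share $O_t$, the targets and the step size.
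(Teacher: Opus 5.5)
Your regret step matches the paper's argument: a common commit probability $a$, regrets $(1-a)\Delta$ and $a\Delta$, and a maximum at least their average. The construction, however, has two problems.

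First, the arithmetic does not give the values you claim. With a common center $c$ you get $V^\pm=\pm c\eta-\tfrac12\eta^2$, so $V^++V^-=-\eta^2\neq0$, and no choice of $c,\eta>0$ makes the values $+\Delta$ and $-\Delta$. Your proposed repair, shifting $\ell_-$ by $-\eta^2$, changes nothing. The value is the root-relative difference $\ell(0)-\ell(\eta)$, and additive constants cancel. You would need different centers in the two worlds, or linear evaluation losses $c\mp(\Delta/\eta)\theta$. If you instead set $V^+=\Delta$ and accept $|V^-|>\Delta$, the minimax regret $V^+|V^-|/(V^++|V^-|)$ is still at least $\Delta/2$. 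That is not the symmetric statement you set out to prove, though.

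Second, and more importantly, you hold the training loss fixed, so the update $\theta^+=\eta$ is identical in both worlds. All of the difference sits in the evaluation distribution $P_t$, so the objective itself changes between worlds. In the paper's setup $P_t$ is a \emph{declared} distribution, and the proof uses a probe loss common to both worlds. What a task-side rule cannot see is how the update moves that fixed objective. A rule that knows its declared objective would tell your two worlds apart. Once the update is literally identical, the clauses ``equally valid targets'' and ``same native step size'' become vacuous. You have shown that the objective is unidentifiable, not that the update's value is.

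The paper keeps a common probe loss $\ell_p(\theta)=c_p+(\Delta/\eta)\theta$ and instead uses training losses $c+\theta$ and $c-\theta$. These agree at the root, so root loss values, root behavior, target validity, and hence $O_t$ all coincide. Their gradients are $+1$ and $-1$, and gradients lie outside $O_t$. One step of size $\eta$ lands at $-\eta$ or $+\eta$, giving values exactly $+\Delta$ and $-\Delta$. A smooth cutoff then makes the losses positive if desired. Moving the world-dependence from $P_t$ to the training gradient fixes both issues, and your regret argument then applies unchanged.
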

The observation interface excludes update gradients. Identical root behavior
can therefore lead to opposite update effects, motivating evaluation of the
realized successor rather than task quality alone.

Screening also inflates the winner's score, $\E[\widehat V^{\rm scr}_{k^*}-V_{k^*}]\geq0$,
even when each candidate's score is unbiased~\citep{cawley2010over,russo2016controlling}, as
Proposition~\ref{prop:selected-optimism} in Appendix~\ref{app:optimism-confirmation} states.

Theorem~\ref{thm:commit} explains why \system screens and confirms on disjoint pools,
because reusing the screen to accept a branch pays a union bound over the $K$ branches, while a fresh pool does not, even when absorption follows recipient selection.

\begin{theorem}[Search--inheritance separation]
\label{thm:commit}
Conditional on the pre-screening history and frozen native candidates,
suppose each screening mean has
zero-mean sub-Gaussian error with proxy $\sigma_{\rm scr}^2/m_{\rm scr}$.
For $\beta>0$, reusing screening for inheritance obeys
\begin{equation}
 \Pr(\exists k:V_k\leq0,\widehat V^{\rm scr}_k>\beta)
 \leq K\exp\!\left(-\frac{m_{\rm scr}\beta^2}
                         {2\sigma_{\rm scr}^2}\right).
 \label{eq:reuse-bound}
\end{equation}
Let $\mathcal F_t^{\rm pre}$ contain the archive, exploration, screening,
material selection, recipient attempts, verification groups, fixed training
trajectories, and completed absorption. If confirmation of the fixed final
proposal uses fresh probes with conditionally
zero-mean sub-Gaussian error of proxy $\sigma_{\rm con}^2/m_{\rm con}$, then
\begin{equation}
 \Pr(\widehat V^{\rm con}(S_t^+)>\beta,V^+\leq0
       \mid\mathcal F_t^{\rm pre})
 \leq \exp\!\left(-\frac{m_{\rm con}\beta^2}
                        {2\sigma_{\rm con}^2}\right).
 \label{eq:fresh-bound}
\end{equation}
\end{theorem}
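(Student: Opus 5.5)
Both bounds follow from the same one-sided sub-Gaussian tail estimate. The difference is that the reuse bound must control $K$ candidates at once, while the fresh bound concerns a single proposal that is fixed before confirmation begins. Throughout we work conditionally on the pre-screening history $\mathcal H_t$ and the frozen native candidates. Under this conditioning each true value $V_k$ is a fixed number, and the only randomness left is the screening error $\varepsilon_k=\widehat V^{\rm scr}_k-V_k$. The basic tool is the Chernoff bound: if $\varepsilon$ is zero-mean sub-Gaussian with proxy $s^2$, then
\[
\Pr(\varepsilon>\beta)\le\inf_{\lambda>0}e^{-\lambda\beta+\lambda^2 s^2/2}=\exp\!\left(-\frac{\beta^2}{2s^2}\right).
\]

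For \eqref{eq:reuse-bound}, fix $k\in\{1,\dots,K\}$. The root is excluded because $\widehat V^{\rm scr}_0=0$ can never exceed $\beta>0$.

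\begin{itemize}
\item If $V_k>0$, the event $\{V_k\le0,\widehat V^{\rm scr}_k>\beta\}$ is empty, since $V_k$ is deterministic given the conditioning.
\item If $V_k\le0$, then $\widehat V^{\rm scr}_k>\beta$ forces $\varepsilon_k>\beta-V_k\ge\beta$. The Chernoff bound with $s^2=\sigma_{\rm scr}^2/m_{\rm scr}$ gives probability at most $\exp(-m_{\rm scr}\beta^2/(2\sigma_{\rm scr}^2))$.
\end{itemize}

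A union bound over the $K$ branches then yields the stated inequality conditionally on $\mathcal H_t$. Taking expectations gives it unconditionally as well. No independence across branches is needed; the shared screening pool is exactly why we use a union bound rather than a product.

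For \eqref{eq:fresh-bound}, the key step is to check that $V^+$ is $\mathcal F_t^{\rm pre}$-measurable. The final proposal $S_t^+$ is determined by the recipient choice, the material selection, the recipient attempts, the fixed trajectories and the completed absorption, all of which $\mathcal F_t^{\rm pre}$ contains. The value $V_t(\cdot)$ is an expectation over the declared distribution $P_t$, so $V^+$ is a deterministic function of $S_t^+$ and hence measurable.

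Condition on $\mathcal F_t^{\rm pre}$ and split on the sign of $V^+$.

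\begin{itemize}
\item On $\{V^+>0\}$ the conditional probability is $0$.
\item On $\{V^+\le0\}$, let $\varepsilon^{\rm con}=\widehat V^{\rm con}(S_t^+)-V^+$. Then $\widehat V^{\rm con}(S_t^+)>\beta$ implies $\varepsilon^{\rm con}>\beta$. By assumption $\varepsilon^{\rm con}$ is conditionally zero-mean sub-Gaussian with proxy $\sigma_{\rm con}^2/m_{\rm con}$ given $\mathcal F_t^{\rm pre}$. The probes are fresh, meaning disjoint from every pool used in $\mathcal F_t^{\rm pre}$, so this conditional assumption is legitimate.
\end{itemize}

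The same Chernoff step gives the single-term bound with no factor of $K$. This holds even though $S_t^+$ depends on the screening outcome through $k^*$, because that dependence has already been absorbed into the conditioning.

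The main obstacle is this measurability and freshness argument rather than the tail computation. We must ensure that nothing in the construction of $S_t^+$ peeks at $P_t^{\rm con}$, and that adaptivity (selection of $k^*$, and material chosen via recipient attempts) enters only through $\mathcal F_t^{\rm pre}$. Otherwise selection-induced optimism (Proposition~\ref{prop:selected-optimism}) would reappear, the conditional mean of $\varepsilon^{\rm con}$ would be nonzero, and the bound would fail.

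I would make this precise by introducing the filtration explicitly. We then verify that Recovery material, screening and confirmation pools are disjoint, as Section~\ref{sec:setup} stipulates. The single-proposal bound then follows without any union over the candidate set $\mathcal C_t^{\rm nat}\cup\{S_t\}$.
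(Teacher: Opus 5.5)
Your proposal is correct and matches the paper's proof: a one-sided Chernoff tail for each non-root candidate with $V_k\le 0$, combined with a union bound over the $K$ branches, which needs no independence and excludes the root because its screening score is identically zero; then a single conditional tail bound given $\mathcal F_t^{\rm pre}$, under which $S_t^+$ and $V^+$ are fixed. The freshness and disjointness you call the main obstacle enter the theorem as the hypothesis of conditionally centered confirmation error, so the proof needs nothing beyond the measurability observation you already make.
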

Across adaptive rounds, per-round budgets $\delta_t$ with
$\sum_t\delta_t\leq\delta$ give a simultaneous bound by choosing the
corresponding positive margins. The reported zero-threshold gate trades this
positive-margin guarantee for greater acceptance sensitivity.

Regret within the realized candidate set is then at most twice the screening error plus the absorption loss, the confirmation error and the margin, and No Recovery has zero absorption loss,
as Corollary~\ref{cor:regret} in Appendix~\ref{app:regret-adaptive} states.

%% file: sections/experiments.tex
\section{Experiments}
\label{sec:experiments}

We first test whether task selection predicts learning and
whether \method{} repairs inheritance, then evaluate the complete loop on four
backbones, and finally measure the value of experience, each component, and the
cost of construction.

\begin{figure}[t]
  \centering
  \includegraphics[width=\textwidth]{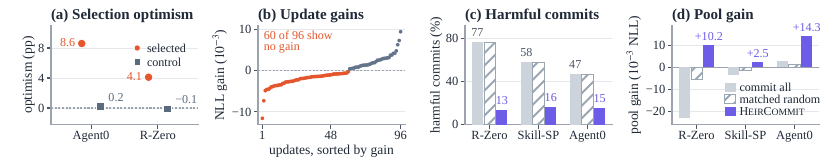}
  \caption{\textbf{Selection does not predict learning, and \method repairs
  inheritance.} (a) Selected-target optimism, the support a target shows on
  the responses that selected it minus its support on fresh responses, against
  fixed-target controls. (b) Correct-target NLL gain of 96 real Skill-SP updates
  on exact-valid tasks, sorted by gain. (c,d) Harmful share of commits and
  pool-mean NLL gain over 64 native updates per host. Matched random commits
  \method's fraction of updates at random.
  Appendix~\ref{app:selection-inheritance-audits} gives the full audit.}
  \label{fig:keyresults}
\end{figure}

\subsection{Task Selection Does Not Predict Learning}

Reusing responses both to select a target and to score it inflates the
target's support. Selected-target optimism, the drop from a candidate's score
on its selection responses to its score on fresh responses, is
\AuditAgentSelected{} pp in Agent0 and \AuditRZeroSelected{} pp in R-Zero, as
Figure~\ref{fig:keyresults}a shows, while fixed-target controls stay near zero
at \AuditAgentControl{} and $\AuditRZeroControl{}$ pp. Under fresh responses,
\AgentFalseFrontier\% of Agent0's and \RZeroFalseFrontier\% of R-Zero's
admitted tasks also leave the native frontier. Even exact-valid tasks need not
teach. In Figure~\ref{fig:keyresults}b, \AuditNonpositiveCount{} of
\AuditUpdateCount{} real Skill-SP updates (\AuditNonpositivePercent\%) do not
reduce correct-target NLL.

\subsection{\method Cuts Harmful Commits on Every Host}

A second, coverage-matched audit takes 64 native updates per host and scores
them on a pool disjoint from the decision probes. \method commits 23--52\% of
the updates and lowers the harmful share of commits from 47--77\% to
13--16\% on all three hosts, as Figure~\ref{fig:keyresults}c shows. Matched
random commits the same fraction at random and keeps the native harmful share,
so \method's cut is 68--83\% relative to it. \method also turns negative pool
gain positive on R-Zero and Skill-SP and raises Agent0's gain fivefold in
Figure~\ref{fig:keyresults}d. It retains 78--87\% of the useful updates, as
Appendix~\ref{app:selection-inheritance-audits} reports.

\subsection{\system Leads on Every Backbone}

\begin{table}[t]
  \centering
  \papertablefont
  \caption{\textbf{Ten-round self-evolution across four backbones (\%).} Each
  average weights its benchmarks equally, and $\pm$ is the sample SD of Math
  Avg.\ over three seeds. ``vs.\ Agent0'' is the relative Math Avg.\ gain of
  \system (Full) over Agent0. Figure~\ref{fig:benchmark-heatmap} and
  Appendix~\ref{app:extended-results} give per-benchmark results.}
  \label{tab:self-evolution}
  \input{tables/self_evolution_compact}
\end{table}

After ten rounds, \system (Full) has the highest Math Avg.\ and General Avg.\
on every backbone of Section~\ref{sec:setup}, as Table~\ref{tab:self-evolution}
shows. It raises the base model's Math Avg.\ by 7.9--30.1\% and exceeds Agent0,
the strongest baseline, by 1.59--2.40 pp (6.0--7.0\%) on the three base models.
\texttt{Gemma-4-E4B-it}, the only instruction-tuned start, leaves the least
headroom, and Full still leads by 0.55 pp.

Across all twelve benchmarks, Full
beats Agent0 on 45 of 48 benchmark--backbone pairs in
Figure~\ref{fig:benchmark-heatmap}, and the three exceptions are 30-problem
AIME sets. No Recovery keeps exploration, screening, and confirmation but
removes absorption. Full exceeds it on every backbone, which isolates Recovery
as a module, and Figure~\ref{fig:math-trajectories} shows Full ending the ten
rounds highest on all four backbones.

These are fixed-round comparisons. A Full round takes 8--9 wall-clock hours on two 96-GB GPUs, as
Appendix~\ref{app:compute-performance} details, and
Section~\ref{sec:exp-budget} compares allocations at equal accounting budget.

\begin{figure}[t]
  \centering
  \includegraphics[width=\textwidth]{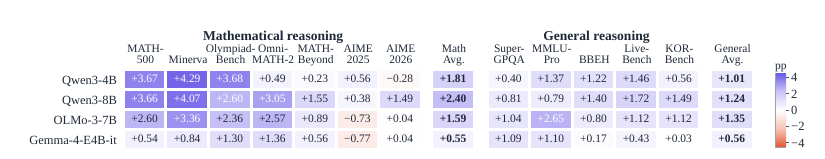}
  \caption{\textbf{Per-benchmark gain of \system (Full) over Agent0 at round
  10} (pp, three-seed means). 45 of 48 cells are positive, and the three
  negative cells are 30-problem AIME sets.}
  \label{fig:benchmark-heatmap}
\end{figure}

\begin{figure}[t]
  \centering
  \includegraphics[width=\textwidth]{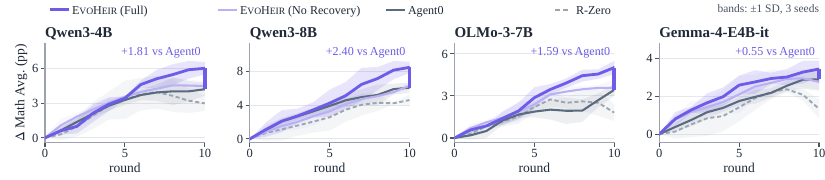}
  \caption{\textbf{Ten-round Math Avg.\ trajectories,} as change from round 0
  (pp). Lines and bands show the mean and $\pm1$ SD over three seeds. The bar
  at round 10 marks \system's lead over Agent0, and each panel has its own y
  scale.}
  \label{fig:math-trajectories}
\end{figure}

\subsection{Experience Outlives Its Model}
\label{sec:experience-value}

The study runs 16 donor interventions on four fixed recipients, from rounds 3,
5, 7, and 10, with two current and two archived donors each. Every intervention
restarts its recipient and is evaluated on the same 1,024 items before
confirmation, and gains are relative to that recipient.

\paragraph{Donor state and experience value diverge.} Eight of the ten donors
weaker than the current root still improve the recipient, as
Figure~\ref{fig:experience-value}a shows. Their mean donor gain is $-1.09$ pp,
but their mean absorption gain is $+0.71$ pp.

\paragraph{Recovery learns and retains.} From the same 64 references, Cache SFT
gains 2.08 pp of new successes but loses 2.61 pp, while Recovery gains 2.54 pp
and loses 1.15 pp, for net gains of $-0.54$ and $+1.39$ pp in
Figure~\ref{fig:experience-value}b. Recovery beats Cache SFT at all four
checkpoints, by 1.93 pp on average, although its own net gain is negative at
round 7 ($-0.20$ pp).

\paragraph{Experience outlives the model that produced it.} At a fixed budget of
64 references, current-only material yields $+0.39$ pp, adding committed-round
history $+0.73$ pp, and admitting all qualified history, including rounds whose
model was rejected, $+1.39$ pp, as Figure~\ref{fig:experience-value}c shows.
A matched replacement then fixes the record count. Three seed groups with four
late-round recipients each swap 32 qualified records from non-inherited
sources for same-age records from inherited sources. The non-inherited records
still add 0.34--0.73 pp at ages one to four, with pointwise 95\% bootstrap
intervals above zero, as Figure~\ref{fig:experience-value}d shows. The gain
therefore comes from the sources, not from more references.

\begin{figure}[t]
  \centering
  \includegraphics[width=\textwidth]{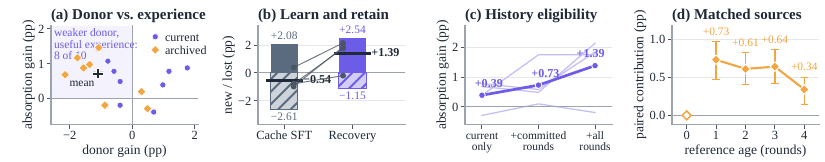}
  \caption{\textbf{Experience value under fixed-recipient interventions}
  (pp, 1,024 items each, before confirmation). (a) Donor gain against
  absorption gain for 16 donors. The shaded quadrant holds weaker donors whose
  experience helps, and $+$ marks their mean. (b) Mean new (up) and lost (down)
  successes from the same 64 references, with black bars at the mean net gain
  and thin lines pairing the two methods at each recipient. (c) Absorption gain
  as history eligibility widens at a fixed 64-reference budget, with thin lines
  for the four recipients. (d) Paired contribution of non-inherited over
  same-age inherited history at 64 records, with pointwise 95\% hierarchical
  bootstrap intervals over three seed groups. Age zero is an identical-material
  control.}
  \label{fig:experience-value}
\end{figure}

\begin{figure}[t]
  \centering
  \input{figures/qualitative-tikz}
  \caption[Qualitative analysis: one round]{\textbf{Qualitative analysis on
  \texttt{Qwen3-4B-Base} at round 5.} (a) A round-4 archived reference from a
  proposal that was not inherited recovers $x=16$ after the recipient drops the
  cross term; Recovery reproduces it without the reference. (b) Under the same
  64-reference budget, Recovery keeps the remainder that Cache SFT loses. The
  fresh-probe gain is $-0.31$ pp, so \method{} retains the root.}
  \label{fig:qualitative}
\end{figure}

\paragraph{Qualitative analysis.} On \texttt{Qwen3-4B-Base} at round 5, the
recipient drops the cross term in $\sqrt{x+9}=\sqrt{x}+1$ and returns $x=0$.
A round-4 archived reference from a proposal that was not inherited recovers
$x=16$, and Recovery reproduces it without the reference. With the same 64
references, Cache SFT changes the correct remainder of $2^{100}\bmod 7$ from
$2$ to $4$, while Recovery keeps $2$. The fresh-probe confirmation gain is
$-0.31$ pp, so \method{} retains the root (Figure~\ref{fig:qualitative}).

\subsection{Targeted Reuse and Confirmation Both Matter}

\begingroup
\setlength{\intextsep}{4pt}
\setlength{\columnsep}{10pt}
\begin{wraptable}{r}{0.47\textwidth}
  \centering
  \papertablefont
  \caption{\textbf{Component ablations} (\%, mean\,{\tiny$\pm$}\,SD over
  three seeds).}
  \label{tab:core-ablations}
  \input{tables/core_ablations_pm}
\end{wraptable}

Removing any one mechanism from Full lowers both averages, as
Table~\ref{tab:core-ablations} shows. Each ablation keeps the native budget,
round count, objective, and absorption cap. Same-Lineage removes
cross-lineage sources, Random Experience replaces recipient-conditioned
selection with uniform sampling, and Current-Round Only removes history.
These three lower Math Avg.\ by 0.97--1.28 pp and General Avg.\ by
0.48--0.65 pp. NoConfirm removes independent confirmation and has the largest
drops, 2.02 and 1.19 pp. Targeted experience reuse and a final inheritance
decision therefore both contribute. All ablations use the
\texttt{Qwen3-4B-Base} backbone.
\par
\endgroup

\subsection{Equal-Budget Comparison and Sensitivity}
\label{sec:exp-budget}

\begin{figure}[t]
  \centering
  \includegraphics[width=\textwidth]{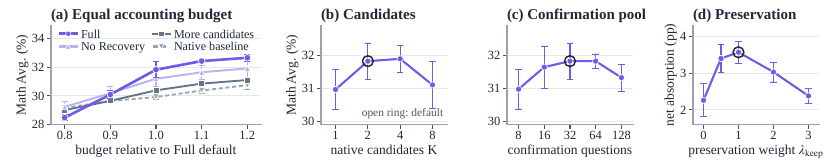}
  \caption{\textbf{Equal-budget comparison and sensitivity on
  \texttt{Qwen3-4B-Base}.} (a--c) Math Avg.\ after ten rounds, mean $\pm$ SD
  over three training seeds. (a) Four methods at equal accounting budgets,
  relative to Full's default of 123.7 units. (b,c) Candidate count and
  confirmation-pool size at budget 1.0, where native training absorbs the
  remaining budget. Open rings mark the defaults, which reuse the main runs.
  (d) Net absorption gain of a fixed recipient on 2,400 problems as the
  preservation weight changes, over three absorption-order seeds. Accounting
  units are not measured GPU-hours or FLOPs.}
  \label{fig:budget-sensitivity}
\end{figure}

A separate study on \texttt{Qwen3-4B-Base} compares methods at equal
cost. A normalized accounting budget charges a single-stream native baseline
100.0 units and Full's default configuration 123.7 units, a 23.7\% overhead
spent on screening, recipient attempts, verification, teacher passes,
absorption, confirmation, and state export. Every variant is trained
separately for ten rounds with three seeds, and native training absorbs the
budget that construction leaves.

Full has the lowest mean at 0.8 of its
default budget but the highest from 1.0 onward, as
Figure~\ref{fig:budget-sensitivity}a shows, so its advantage depends on
leaving enough native training after construction and confirmation are paid
for. At budget 1.0, four candidates give the largest mean, close to the default
of two, and confirmation pools of 32 and 64 questions are also close. The
default preservation weight $\lambda_{\rm keep}=1$ gives the largest net
absorption gain in Figure~\ref{fig:budget-sensitivity}d. Larger settings
consume native-training budget and need not improve the final learner.

%% file: tables/self_evolution_compact.tex
\setlength{\tabcolsep}{2.2pt}
\renewcommand{\arraystretch}{1.08}
\begin{tabular*}{\textwidth}{@{\extracolsep{\fill}}l cccccc ccccc@{}}
\toprule
 & \multicolumn{6}{c}{Math Avg.\ (7 benchmarks) $\uparrow$} & \multicolumn{5}{c}{General Avg.\ (5 benchmarks) $\uparrow$} \\
\cmidrule(lr){2-7}\cmidrule(l){8-12}
Backbone & Base & R-Zero & Agent0 & No Rec. & \textbf{\system} & vs.\ Agent0 & Base & R-Zero & Agent0 & No Rec. & \textbf{\system} \\
\midrule
Qwen3-4B-Base & 25.81 & 28.79\,{\tiny$\pm$0.65} & 30.01\,{\tiny$\pm$0.57} & 30.29\,{\tiny$\pm$0.40} & \textbf{31.82}\,{\tiny$\pm$0.55} & $+$6.0\% & 27.00 & 28.82 & 29.51 & 29.82 & \textbf{30.52} \\
Qwen3-8B-Base & 28.06 & 32.65\,{\tiny$\pm$0.65} & 34.11\,{\tiny$\pm$0.63} & 34.35\,{\tiny$\pm$0.38} & \textbf{36.51}\,{\tiny$\pm$0.67} & $+$7.0\% & 32.34 & 33.84 & 35.24 & 35.75 & \textbf{36.48} \\
OLMo-3-7B & 19.92 & 21.73\,{\tiny$\pm$0.61} & 23.35\,{\tiny$\pm$0.44} & 23.50\,{\tiny$\pm$0.56} & \textbf{24.94}\,{\tiny$\pm$0.48} & $+$6.8\% & 18.05 & 18.66 & 20.05 & 20.34 & \textbf{21.40} \\
Gemma-4-E4B-it & 43.77 & 45.11\,{\tiny$\pm$0.50} & 46.66\,{\tiny$\pm$0.59} & 46.53\,{\tiny$\pm$0.42} & \textbf{47.21}\,{\tiny$\pm$0.44} & $+$1.2\% & 38.88 & 39.01 & 40.37 & 40.73 & \textbf{40.93} \\
\bottomrule
\end{tabular*}

%% file: figures/qualitative-tikz.tex
\input{figures/eh-palette}
\input{figures/teaser-icons}
\def\ehSlotQ{\tikz[baseline=(q.base),line cap=round]{\node[draw=ehHarm,dashed,dash pattern=on 1.6pt off 1pt,line width=.65pt,rounded corners=1.4pt,inner sep=1pt,minimum width=.30cm,minimum height=.19cm,font=\rmfamily\bfseries\fontsize{5.6}{6.2}\selectfont](q){$?$};}}
\def\ehSlotF{\tikz[baseline=(f.base),line cap=round]{\node[draw=ehAccent,fill=ehAccent!12,line width=.7pt,rounded corners=1.4pt,inner sep=1pt,minimum width=.42cm,minimum height=.21cm,font=\rmfamily\fontsize{5.9}{6.6}\selectfont](f){$2\sqrt{x}$};}}
\def\ehTick#1{\draw[tick] (#1) circle (.14);\draw[draw=ehAccent,line width=.9pt] ([xshift=-.07cm]#1)--([xshift=-.02cm,yshift=-.05cm]#1)--([xshift=.07cm,yshift=.06cm]#1);}
\def\ehCross#1{\draw[cross] (#1) circle (.09);\draw[draw=ehHarm,line width=.75pt] ([xshift=-.06cm,yshift=-.06cm]#1)--([xshift=.06cm,yshift=.06cm]#1) ([xshift=-.06cm,yshift=.06cm]#1)--([xshift=.06cm,yshift=-.06cm]#1);}
\begin{tikzpicture}[x=1cm,y=1cm,text=ehInk,line cap=round,line join=round,
  font=\rmfamily\fontsize{6.0}{7.0}\selectfont,
  head/.style={font=\rmfamily\bfseries\fontsize{7.2}{8}\selectfont,anchor=west,inner sep=0pt,outer sep=0pt},
  note/.style={font=\rmfamily\fontsize{5.4}{6.3}\selectfont,text=ehInk2,anchor=west,inner sep=0pt,outer sep=0pt},
  tiny/.style={font=\rmfamily\fontsize{4.9}{5.8}\selectfont,text=ehInk2,anchor=west,inner sep=0pt,outer sep=0pt},
  mathline/.style={anchor=west,inner sep=0pt,outer sep=0pt,font=\rmfamily\fontsize{6.2}{7.2}\selectfont},
  flow/.style={-{Latex[length=1.5mm,width=1.1mm]},draw=ehInk,line width=.7pt},
  flowacc/.style={flow,draw=ehAccent,line width=.85pt},
  flowharm/.style={flow,draw=ehHarm},
  arch/.style={-{Latex[length=1.2mm,width=.9mm]},draw=ehArchiveInk,line width=.6pt,
               dash pattern=on 2pt off 1.2pt},
  pill/.style={rounded corners=3pt,line width=.7pt},
  tick/.style={draw=ehAccent,fill=white,line width=.7pt},
  cross/.style={draw=ehHarm,fill=white,line width=.6pt},
  statebox/.style={rounded corners=2.2pt,draw=ehSlate!36,line width=.45pt,fill=ehSlate!5},
  verdictbox/.style={rounded corners=2.2pt,draw=ehAccent!45,line width=.6pt,fill=ehAccent!5}]
\path[use as bounding box] (0.05,0.80) rectangle (13.95,3.90);

\draw[statebox] (0.06,2.28) rectangle (3.10,3.52);
\draw[statebox] (10.16,2.28) rectangle (13.80,3.52);
\draw[statebox] (0.06,0.93) rectangle (1.98,1.77);
\draw[statebox] (2.48,0.84) rectangle (6.92,1.86);
\draw[draw=ehSlate!26,line width=.45pt] (2.54,1.36)--(6.86,1.36);
\draw[verdictbox] (8.70,0.92) rectangle (11.04,1.78);

\fill[ehAccent,rounded corners=.6pt] (0.08,3.60) rectangle (0.16,3.76);
\node[tiny] (hA) at (0.24,3.68) {(a)};
\node[head] (tA) at ([xshift=4pt]hA.east) {Repair};

\node[tiny] at (0.15,3.42) {recipient, no reference};
\node[mathline] at (0.15,3.12) {$\sqrt{x+9}=\sqrt{x}+1$};
\node[mathline] at (0.15,2.88) {$\Rightarrow\ x+9=x+\;\,\ehSlotQ\;\,+1$};
\node[mathline] (eqL3) at (0.15,2.64) {$\Rightarrow\ \textcolor{ehHarm}{x=0}$};
\node[coordinate] (crL) at ([xshift=0.30cm]eqL3.east) {};
\ehCross{crL}
\node[tiny,text=ehHarm] at (0.15,2.40) {the cross term is dropped};
\draw[flow] (3.16,2.90)--(4.90,2.90);

\draw[fill=ehAccent!6,draw=ehAccent,line width=.75pt]
  (4.95,2.90)--(5.45,3.36)--(7.35,3.36)--(7.85,2.90)--(7.35,2.44)--(5.45,2.44)--cycle;
\node[head,anchor=center,text=ehAccent] at (6.40,3.02) {Recovery};
\node[tiny,anchor=center] at (6.40,2.72) {frozen teacher};
\ehBadgeSnow{7.48}{3.22}{.9}
\ehArchive{4.50}{3.66}{.72}
\draw[arch] (4.80,3.58) .. controls (5.00,3.52) and (5.14,3.36) .. (5.24,3.21);
\node[tiny,text=ehArchiveInk] at (5.42,3.64) {round-4 reference, not inherited};
\draw[flowacc] (7.90,2.90)--(10.12,2.90);
\node[tiny,text=ehAccent,anchor=south] at (9.01,2.95) {fills the gap};

\node[tiny] at (10.30,3.42) {after Recovery, no reference};
\node[mathline] at (10.30,3.12) {$\sqrt{x+9}=\sqrt{x}+1$};
\node[mathline] at (10.30,2.88) {$\Rightarrow\ x+9=x+\;\,\ehSlotF\;\,+1$};
\node[mathline] (eqR3) at (10.30,2.64) {$\Rightarrow\ \sqrt{x}=4$};
\node[coordinate] (pinA) at ([xshift=0.78cm]eqR3.east) {};
\draw[pill,fill=ehAccent!12,draw=ehAccent] (pinA) ++(-0.42,-0.11) rectangle ++(0.84,0.22);
\node[note,anchor=center,text=ehAccent,font=\rmfamily\bfseries\fontsize{5.9}{6.8}\selectfont] at (pinA) {$x=16$};
\node[coordinate] (chkA) at ([xshift=0.66cm]pinA) {};
\ehTick{chkA}
\node[tiny,text=ehAccent] at (10.30,2.40) {the cross term is restored};

\draw[black!12,line width=.45pt] (0.12,2.25)--(13.88,2.25);

\fill[ehAccent,rounded corners=.6pt] (0.08,2.00) rectangle (0.16,2.16);
\node[tiny] (hB) at (0.24,2.08) {(b)};
\node[head] (tB) at ([xshift=4pt]hB.east) {Keep};

\node[tiny,font=\rmfamily\fontsize{4.6}{5.5}\selectfont] at (0.15,1.61) {recipient, before absorption};
\node[mathline] at (0.15,1.39) {$2^{100}\bmod 7$};
\draw[pill,fill=ehAccent!12,draw=ehAccent] (0.15,0.99) rectangle (0.57,1.21);
\node[note,anchor=center,text=ehAccent,font=\rmfamily\bfseries\fontsize{5.9}{6.8}\selectfont] at (0.36,1.10) {$2$};
\node[coordinate] (ck0) at (0.80,1.10) {};
\ehTick{ck0}
\draw[ehInk,line width=.7pt] (1.98,1.35)--(2.16,1.35);
\draw[flow,rounded corners=2pt] (2.16,1.35)|-(2.44,1.61);
\draw[flow,rounded corners=2pt] (2.16,1.35)|-(2.44,1.10);

\node[note,text=ehHarm,font=\rmfamily\bfseries\fontsize{5.8}{6.7}\selectfont] at (2.60,1.61) {Cache SFT};
\draw[flowharm] (3.70,1.61)--(4.30,1.61);
\node[mathline,text=ehHarm] (res1) at (4.40,1.61) {$2\to 4$};
\node[coordinate] (cr1) at ([xshift=0.30cm]res1.east) {};
\ehCross{cr1}
\node[tiny,text=ehHarm] at (5.77,1.61) {success lost};

\node[note,text=ehAccent,font=\rmfamily\bfseries\fontsize{5.8}{6.7}\selectfont] at (2.60,1.10) {Recovery};
\draw[flowacc] (3.70,1.10)--(4.30,1.10);
\node[mathline,text=ehAccent] (res2) at (4.40,1.10) {$2\to 2$};
\node[coordinate] (ck2) at ([xshift=0.30cm]res2.east) {};
\ehTick{ck2}
\node[tiny,text=ehAccent] at (5.77,1.10) {success kept};

\draw[flowacc] (6.98,1.35)--(8.64,1.35);
\node[tiny,text=ehAccent,anchor=south] at (7.81,1.40) {proposal};

\node[head] (tC) at (8.80,2.08) {Confirmation};
\node[note] at ([xshift=6pt]tC.east) {on fresh probes};
\ehProbes{9.05}{1.35}{.95}
\node[note,font=\rmfamily\bfseries\fontsize{6.2}{7.1}\selectfont,text=ehInk] at (9.45,1.55) {$\Delta=-0.31$ pp};
\node[note] at (9.45,1.34) {proposal rejected};
\node[note,text=ehAccent,font=\rmfamily\bfseries\fontsize{5.6}{6.5}\selectfont] at (9.45,1.14) {root retained};
\draw[arch] (11.10,1.35)--(12.92,1.35);
\ehArchive{13.30}{1.36}{1.0}
\node[tiny,text=ehArchiveInk,anchor=south,align=center] at (12.01,1.40) {experience kept\\regardless of commit};
\end{tikzpicture}

%% file: tables/core_ablations_pm.tex
\setlength{\tabcolsep}{4pt}
\renewcommand{\arraystretch}{1.10}
\begin{tabular}{@{}lcc@{}}
\toprule
\textbf{Variant} & \textbf{Math Avg.} $\uparrow$ & \textbf{General Avg.} $\uparrow$ \\
\midrule
\textbf{\system (Full)} & \textbf{\AblFullMath}\,{\tiny$\pm$\AblFullMathSpread} & \textbf{\AblFullGeneral}\,{\tiny$\pm$\AblFullGeneralSpread} \\
\midrule
Same-Lineage & \AblLineageMath\,{\tiny$\pm$\AblLineageMathSpread} & \AblLineageGeneral\,{\tiny$\pm$\AblLineageGeneralSpread} \\
Random Experience & \AblRandomMath\,{\tiny$\pm$\AblRandomMathSpread} & \AblRandomGeneral\,{\tiny$\pm$\AblRandomGeneralSpread} \\
Current-Round Only & \AblCurrentMath\,{\tiny$\pm$\AblCurrentMathSpread} & \AblCurrentGeneral\,{\tiny$\pm$\AblCurrentGeneralSpread} \\
NoConfirm & \AblNoConfirmMath\,{\tiny$\pm$\AblNoConfirmMathSpread} & \AblNoConfirmGeneral\,{\tiny$\pm$\AblNoConfirmGeneralSpread} \\
\bottomrule
\end{tabular}

%% file: sections/limitations_conclusion.tex
\section{Conclusion}

We identify the selection-to-update gap: admitting a
task that looks learnable does not ensure a useful update, and
\AuditNonpositiveCount{} of \AuditUpdateCount{} updates on verified tasks show
no gain. \method inherits a successor only when it beats the current model on
fresh probes and cuts the harmful share of commits by 68--83\% on
three self-evolution hosts. \system separates the parameters a round inherits
from the experience it keeps and raises the math average by up to
7.0\% over Agent0, the strongest baseline. Self-evolving systems
should judge the updates they produce, not only the tasks they choose, and keep
experience beyond the models that produced it.

%% file: appendix/experimental_setup.tex
\section{Experimental Setup and Hyperparameters}
\label{app:experimental-setup}

\paragraph{Backbones and protocol.}
The checkpoints are \texttt{Qwen/Qwen3-4B-Base}, \texttt{Qwen/Qwen3-8B-Base},
\texttt{allenai/Olmo-3-1025-7B}, and \texttt{google/gemma-4-E4B-it}. The two
Qwen checkpoints and OLMo are pretrained base models; Gemma is
instruction-tuned. Every backbone and method runs ten evolution rounds with
three training seeds, with evaluation after every round. The evaluated
checkpoint is the learner actually inherited at that round, $S_{t+1}$,
including rounds that retain $S_t$. In the host-system \method studies,
Skill-SP, R-Zero, and Agent0 supply native updates and an integration adapter
governs inheritance.

\paragraph{Native exploration and screening.}
\label{app:native-hyperparameters}
Each round makes 1,024 task-generation attempts in one batch and uses ten
learner responses per question for native admission. Two isolated native
branches are then trained from the same root, each on a cohort of 64
questions, with the recipe in Table~\ref{tab:optimization-settings}. Native
training updates all parameters of the Curriculum Scout and Evolving Learner
with AdamW~\citep{loshchilov2019decoupledweightdecayregularization}, and
Recovery updates LoRA parameters~\citep{hu2021loralowrankadaptationlarge}.
Screening scores the root and the two branch endpoints on 16 questions from
$P_t^{\rm scr}$, and confirmation scores the final proposal on 32 fresh
questions from $P_t^{\rm con}$, with four responses per question and state.
Screening ties favor the root, then the branches in a fixed order.

\newsavebox{\optimizationtablebox}
\begin{table}[!htbp]
  \centering
  \papertablefont
  \caption{\textbf{Optimization settings for \system (Full).}
  Native learner settings are per branch. Batch sizes distinguish prompt
  groups from individual responses.}
  \label{tab:optimization-settings}
  \setlength{\tabcolsep}{4pt}
  \renewcommand{\arraystretch}{1.08}
  \begin{lrbox}{\optimizationtablebox}
  \begin{tabular}{@{}lccc@{}}
    \toprule
    Setting & \shortstack{Curriculum\\Scout} & \shortstack{Evolving Learner\\(native training)} & \shortstack{Evolving Learner\\(Recovery)} \\
    \midrule
    Trainable parameters & Full model & Full model & LoRA \\
    Optimizer & AdamW & AdamW & AdamW \\
    Learning rate & $10^{-6}$ & $10^{-6}$ & $5\times10^{-5}$ \\
    Weight decay & $0.01$ & $0.01$ & $0$ \\
    Per-round optimization budget & 5 calls & 15 calls & $\leq16$ updates \\
    Questions per batch & 16 & 8 & $\leq4$ \\
    Responses per question & 5 & 5 & 1 fixed response \\
    Effective batch (responses) & 80 & 40 & $\leq4$ \\
    Per-device microbatch (responses) & 1 & 1 & 1 \\
    GPUs per stage & 1 & 1 & 1 \\
    LoRA rank / scaling $\alpha$ & N/A & N/A & 8 / 16 \\
    LoRA target projections & N/A & N/A & \texttt{q\_proj}, \texttt{v\_proj} \\
    \bottomrule
  \end{tabular}
  \end{lrbox}
  \usebox{\optimizationtablebox}
  \par\smallskip
  \begin{minipage}{\wd\optimizationtablebox}
    \raggedright
    \papertablefont The native scheduler uses zero learning rate on its first
    optimizer call and $10^{-6}$ thereafter. Native budgets count optimizer
    calls, including this initial call, rather than nonzero parameter updates.
  \end{minipage}
\end{table}

\paragraph{Consensus labels and probe qualification.}
\label{app:consensus-protocol}
The round-start Evolving Learner $S_t$ generates ten admission responses
per question using the settings in Appendix~\ref{app:generation-evaluation}.
In the first round, $S_t$ is the initial checkpoint; subsequent rounds use the
actually inherited learner. The answer comparator clusters successfully
extracted, nonempty answers. If $m_{\rm valid}$ responses yield such answers
and the largest cluster contains $c_{\max}$ responses, consensus support is
$c_{\max}/m_{\rm valid}$. Questions with $m_{\rm valid}=0$ are excluded.
The representative answer of the largest cluster becomes the pseudo-label;
probes require support at least $0.7$. Ties select the first encountered
largest cluster, but tied maxima have support at most $0.5$ and cannot
qualify as probes. Each role's question--label file is frozen and shared
across all states evaluated on that role's pool.
Confirmation questions and labels are fixed before candidate construction and
excluded from native training, screening, and Recovery. Screening and
confirmation use fresh scoring responses, seeds, and caches, separate from the
admission responses that form the labels. \method adopts $S_t^+$ only when its
confirmation gain is strictly positive, that is, $\tau_t=0$ in
Eq.~\ref{eq:heircommit}.

\paragraph{Generation settings.}
\label{app:generation-evaluation}
The Curriculum Scout's native rollouts use temperature $1.0$, top-$p=0.99$,
and five responses per question, and offline candidate generation uses
temperature $1.0$ and top-$p=0.95$. Learner generation is capped at 4,096
response tokens with top-$k$ filtering disabled unless stated otherwise.
Native learner training samples five responses at temperature $1.0$ and
top-$p=0.99$. Native admission uses ten responses at temperature $1.0$,
top-$p=1$, and top-$k=40$. Screening and confirmation use four responses at
temperature $0.7$ and top-$p=0.95$, and recipient attempts use one greedy
response. Appendix~\ref{app:extended-results} gives the benchmark decoding and
metrics.

%% file: appendix/detailed_method.tex
\section{Detailed Method and Implementation}
\label{app:detailed-method}

\subsection{State Construction and Recipient Selection}
\label{app:state-construction}

The resumable learner $S_t=(\theta_t,\omega_t)$ holds the model parameters and
the mutable training state, including optimizer moments, the learning-rate
scheduler, and random generators. The Evolution Controller runs each admitted
cohort in an isolated copy of the same root, and each branch follows the host's
training objective. The controller freezes the candidate set
$\mathcal C_t^{\rm nat}\cup\{S_t\}$ before screening, selects the recipient by
paired root-relative gain on $P_t^{\rm scr}$, and breaks ties in favor of the
root, followed by a fixed branch order. Algorithm~\ref{alg:evoheir} lists one
round.

\begin{algorithm}[t]
\caption{\system at round $t$}
\label{alg:evoheir}
\begin{algorithmic}[1]
\Require root $S_t$, controller $A_t$, archive $M_t$, budget, fresh probe pools
\State $b_{1:J}\gets\Call{Scout}{S_t,A_t}$
\State $\mathcal C_t^{\rm nat},E_t\gets
 \Call{NativeExplore}{S_t,b_{1:J}}$
 \Comment{isolated states and train-role responses}
\State Freeze $\mathcal C_t^{\rm nat}\cup\{S_t\}$
\State $S_t^{\rm sel}\gets\Call{Screen}{S_t,\mathcal C_t^{\rm nat},P_t^{\rm scr}}$
\State $S_t^+\gets S_t^{\rm sel}$
\If{Recovery is enabled}
  \State $D_t\gets\Call{QualifyMaterial}{E_t,M_t}$
  \State $Y_t\gets\Call{RecipientAttempts}{S_t^{\rm sel},D_t}$
   \Comment{exact-checkpoint reuse or one rollout batch}
  \State $D_t^{\rm repair},D_t^{\rm keep}\gets\Call{VerifyAndGroup}{D_t,Y_t}$
  \State Select from both groups within the predeclared absorption budget
  \State Freeze recipient teacher, trajectories, and verification groups
  \State $S_t^+\gets\Call{Absorb}{S_t^{\rm sel},D_t^{\rm repair},D_t^{\rm keep}}$
   \Comment{identity if no eligible experience}
\EndIf
\State Freeze $S_t^+$
\State $v\gets\Call{PairedGain}{S_t,S_t^+,P_t^{\rm con}}$
 \Comment{zero for an identical state}
\State $S_{t+1}\gets S_t^+$ if $v>\tau_t$, otherwise $S_t$
\State $M_{t+1}\gets\Call{UpdateArchive}{M_t,E_t}$
 \Comment{qualified train-role records, regardless of commit}
\State $A_{t+1}\gets
 \Call{Observe}{A_t,\operatorname{source}(S_t^{\rm sel}),v}$
\State Freeze $S_{t+1}$ before sealed-final evaluation; \Return $S_{t+1}$
\end{algorithmic}
\end{algorithm}

\subsection{Recipient-Conditioned Recovery}
\label{app:recipient-recovery}
\label{app:conditional-recovery}

Recovery adapts distillation~\citep{hinton2015distillingknowledgeneuralnetwork}
and Learning without Forgetting~\citep{li2017learningforgetting} to
recipient-specific failures and successes. Current records $E_t$ and archived
records $M_t$ supply problems and qualified references, each with its
generating checkpoint, round, and verification trace. A cached recipient
attempt is reused only at the exact selected checkpoint. The same
answer-equivalence rule verifies the attempt and its reference, and membership
in $D_t^{\rm repair}$ or $D_t^{\rm keep}$ is decided for the current recipient,
even when the reference comes from an earlier round. The frozen teacher and the
student both start from $\theta_t^{\rm sel}$ and predict at the same fixed
recipient prefixes throughout absorption, as in Eq.~\ref{eq:recovery-loss}, so
the procedure does not resample from the changing student.

\subsection{Archive, Feedback, and State Inheritance}
\label{app:archive-inheritance}

\method freezes $S_t^+$ before scoring it against the original root on the
fixed confirmation pool with the same probes and frozen labels
(Appendix~\ref{app:consensus-protocol}). It commits the complete proposal when
the paired gain exceeds $\tau_t$, with $\tau_t=0$ in the empirical mode, and
otherwise restores the complete root state. Each round confirms one final
proposal, and later rounds receive fresh evidence. Archive updates follow a
separate path in Algorithm~\ref{alg:evoheir}: qualified current records are
considered for admission regardless of the commit outcome, so a round that
retains the root still contributes references to later rounds. The controller
records the signed confirmation value with recipient and experience provenance,
and subsequent curricula follow the host's feedback rule, shared by Full and
No Recovery.

\paragraph{Failure modes and safeguards.}
An incomplete or conflicting reference is excluded before training, a
duplicate normalized problem occupies one archive slot, and a cached attempt is
reused only from the exact selected checkpoint. A flawed reference that passes
the verifier can still steer the repair teacher, so repair depends on
verification quality. Absorption can also lose successes, as the round-7 net
gain of $-0.20$ pp in Figure~\ref{fig:experience-value} shows. With the
empirical zero threshold, sampling noise can admit a harmful proposal; the
positive-margin mode of Theorem~\ref{thm:commit} bounds that risk.

\subsection{Implementation Configuration}
\label{app:implementation-configuration}

Appendix~\ref{app:experimental-setup} and
Table~\ref{tab:optimization-settings} give the \system (Full) recipe. The
remaining implementation details are as follows.

\begin{description}[font=\normalfont\itshape,style=nextline,leftmargin=0pt,
  labelsep=0pt,itemsep=3pt,topsep=3pt]
\item[Response-token aggregation.]
Response loss is computed only on recorded response tokens; prompt tokens are
masked out. If an EOS token is present in the recorded response, it is included
in the loss. Truncated responses contribute their available response prefix
without synthesizing a missing EOS token. Token-level forward KL over the full
vocabulary is averaged within each response, then over questions, so repair
and keep examples have equal per-question weight.
\item[Absorption and optimizer state.]
Recovery uses AdamW with learning rate $5\times10^{-5}$, $(\beta_1,\beta_2)=(0.9,0.999)$,
epsilon $10^{-8}$, zero weight decay, gradient clipping at $1.0$, and LoRA
dropout $0$. The adapter has rank $8$, scaling $\alpha=16$, and targets
\texttt{q\_proj} and \texttt{v\_proj}; absorption uses at most 16 updates.
After absorption, the adapter is merged into the recipient and the merged
weights are reloaded and checked for exact consistency. If the absorbed model
is accepted, it becomes the next native root and the native optimizer and
scheduler are reinitialized; Recovery optimizer state is not carried into
native training. If the proposal is rejected, the original root is retained
with its parameters, optimizer moments, scheduler, and random state unchanged.
At the next round's native-stage boundary, the native optimizer and scheduler
are reinitialized for whichever model was inherited. Rollback restores the
root snapshot; it does not carry Recovery moments into a later stage.
\item[Archive policy.]
The archive stores at most 256 qualified training records, deduplicated by
normalized problem identity. Conflicting labels, verification conflicts,
known reasoning errors, incomplete responses, and non-training records are
excluded. When capacity is exceeded, records are selected deterministically
using round/source-stratified round-robin admission, with stable hash ordering
within each group.
\item[Recovery budget.]
Each round nominates up to 32 exploration records and 32 learner-success
records, and recipient attempts reuse exact-checkpoint cached responses or use
one additional response within a 64-generation budget. Recovery makes a single
pass with at most 16 optimizer updates, capped at 131,072 response loss tokens
and 262,144 combined teacher and student forward tokens. An empty usable pool
leaves $S_t^{\rm sel}$ unchanged.
\end{description}

%% file: appendix/theoretical_analysis.tex
\section{Theoretical Analysis and Proofs}
\label{app:theoretical-analysis}

\subsection{Setup and Task-Side Non-Identifiability}
\label{app:non-identifiability}

Fix a round and index the frozen native candidates by $1,\ldots,K$, with
index $0$ denoting root. Write $V_0=0$,
$\widehat V_k^{\rm scr}=V_k+\varepsilon_k$, $\varepsilon_0=0$, and
$k^*=\arg\max_{0\leq k\leq K}\widehat V_k^{\rm scr}$, using the tie rule
in Appendix~\ref{app:state-construction}. The completed proposal has value
$V^+=V_t(S_t^+)$.

For screening, $\mathcal G_t$ denotes the history before screening,
including root, the declared distribution $P_t$, and the frozen native
candidates. For confirmation, $\mathcal F_t^{\rm pre}$ contains the full
construction history, including the archive, screening, recipient attempts,
material selection, verification groups, and completed absorption.
Thus $S_t^+$ and $V^+$ are fixed conditional on
$\mathcal F_t^{\rm pre}$. The task-side observation $O_t$ in
Proposition~\ref{prop:identifiability} is the pre-update behavioral
interface defined in Section~\ref{sec:setup}.

\begin{proof}[Proof of Proposition~\ref{prop:identifiability}]
Fix a native step size $\eta>0$ and a value gap $\Delta>0$. At root
$\theta=0$, consider two learning worlds with local training losses and a
common probe loss
\begin{equation}
 \ell_+(\theta)=c+\theta,\qquad
 \ell_-(\theta)=c-\theta,\qquad
 \ell_p(\theta)=c_p+\frac{\Delta}{\eta}\theta.
 \label{eq:app-indistinguishable-losses}
\end{equation}
The worlds share the stipulated root behavior, target validity, and root
loss values. Their training gradients are $+1$ and $-1$, respectively,
and are outside $O_t$. One gradient step reaches $-\eta$ or $+\eta$.
For $q=-\ell_p$, the root-relative values are therefore $+\Delta$ and
$-\Delta$. Positive smooth losses on the whole real line can be obtained
by multiplying each linear term by a smooth cutoff in $[0,1]$, equal to one on
$[-\eta,\eta]$ and zero outside $[-2\eta,2\eta]$, choosing
$c>2\eta$ and $c_p>2\Delta$. This preserves the root gradients and both
post-update probe values.

An observation-based rule uses the same commit probability $p$ in both
worlds. Against the better of root and the native update, its expected
regrets are $(1-p)\Delta$ and $p\Delta$. Their maximum is at least
their average, $\Delta/2$.
\end{proof}

\subsection{Selected-Future Optimism and Independent Confirmation}
\label{app:optimism-confirmation}

\begin{proposition}[Selected-future optimism]
\label{prop:selected-optimism}
Condition on the frozen native candidates and root. If
$\widehat V^{\rm scr}_k=V_k+\varepsilon_k$ with
$\E[\varepsilon_k]=0$ for each $k$, then the screen winner satisfies
$\E[\widehat V^{\rm scr}_{k^*}-V_{k^*}]\geq0$.
\end{proposition}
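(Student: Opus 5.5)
The plan is to compare the winner's score against the score of a fixed, data-independent index, and to use the max to dominate it. Conditional on the frozen candidates and root, the true values $V_0=0,V_1,\ldots,V_K$ are constants, so the only randomness is in $\varepsilon_0=0,\varepsilon_1,\ldots,\varepsilon_K$. I would write the target quantity as $\E[\widehat V^{\rm scr}_{k^*}-V_{k^*}]=\E[\varepsilon_{k^*}]$, since $\widehat V^{\rm scr}_{k^*}-V_{k^*}=\varepsilon_{k^*}$ pointwise, whichever index $k^*$ the tie rule picks.

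The key step is a pointwise inequality. Let $j=\arg\max_{0\leq k\leq K}V_k$ be a fixed oracle-best index; it is deterministic given the conditioning. By definition of $k^*$,
\begin{equation*}
V_{k^*}+\varepsilon_{k^*}=\widehat V^{\rm scr}_{k^*}\geq \widehat V^{\rm scr}_{j}=V_j+\varepsilon_j ,
\end{equation*}
so $\varepsilon_{k^*}\geq \varepsilon_j+(V_j-V_{k^*})\geq\varepsilon_j$, because $V_j\geq V_{k^*}$. This holds for every realization, regardless of how ties are broken. Taking expectations and using $\E[\varepsilon_j]=0$ (valid since $j$ is fixed, and also trivially if $j=0$, where $\varepsilon_0=0$) gives $\E[\varepsilon_{k^*}]\geq \E[V_j-V_{k^*}]\geq0$. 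This also yields the stronger statement that the optimism is at least the expected selection shortfall $\E[V_j-V_{k^*}]$.

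The only delicate point, and the one I would state carefully, is measurability and integrability. Here $\varepsilon_{k^*}$ is a random index evaluation, so I would note that $|\varepsilon_{k^*}|\leq\sum_k|\varepsilon_k|$, which is integrable when each $\varepsilon_k$ has a finite mean. Finiteness of the mean is implicit in $\E[\varepsilon_k]=0$. One should not argue $\E[\varepsilon_{k^*}]=\sum_k\E[\varepsilon_k]\Pr(k^*=k)$, which fails because $k^*$ depends on the errors; the pointwise comparison with a fixed index sidesteps this entirely. An alternative route is to bound below by $\E[\max_k(V_k+\varepsilon_k)]-\max_k V_k\geq0$ via Jensen on the convex max, but the fixed-index argument is shorter.
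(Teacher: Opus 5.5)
Your proof is correct and is essentially the paper's argument. The paper writes $\E[\widehat V^{\rm scr}_{k^*}]=\E[\max_k(V_k+\varepsilon_k)]\ge\max_k V_k\ge\E[V_{k^*}]$ and justifies the first inequality by convexity of the max; your pointwise comparison $\widehat V^{\rm scr}_{k^*}\ge\widehat V^{\rm scr}_{j}$ with the fixed oracle-best index $j$ is the elementary justification of that same step, and like the paper's proof it needs only centering of each error conditional on the frozen candidates, and allows correlated errors and any tie rule.
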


\begin{proof}[Proof of Proposition~\ref{prop:selected-optimism}]
Condition on $\mathcal G_t$ and assume
$\E[\varepsilon_k\mid\mathcal G_t]=0$ for each fixed candidate.
The maximum is convex, so
\begin{equation}
 \begin{aligned}
 \E[\widehat V_{k^*}^{\rm scr}\mid\mathcal G_t]
 &=\E\!\left[\max_{0\leq k\leq K}(V_k+\varepsilon_k)
       \mid\mathcal G_t\right]\\
 &\geq\max_{0\leq k\leq K}V_k
 \geq\E[V_{k^*}\mid\mathcal G_t].
 \end{aligned}
 \label{eq:app-selected-optimism}
\end{equation}
Subtracting the last expectation gives nonnegative expected optimism.
The argument uses conditional centering of each error and allows
correlations induced by shared probes.
\end{proof}

\begin{proof}[Proof of Theorem~\ref{thm:commit}]
A conditionally centered sub-Gaussian error $X$ with proxy $s^2>0$
satisfies $\E[e^{uX}\mid\mathcal H]\leq e^{u^2s^2/2}$.
For $a>0$, exponential Markov inequality and optimization over $u>0$ give
\begin{equation}
 \Pr(X>a\mid\mathcal H)
 \leq\inf_{u>0}\exp\!\left(-ua+\frac{u^2s^2}{2}\right)
 =\exp\!\left(-\frac{a^2}{2s^2}\right).
 \label{eq:app-subgaussian-tail}
\end{equation}
For a candidate with $V_k\leq0$, the event
$\widehat V_k^{\rm scr}>\beta$ implies $\varepsilon_k>\beta$.
Applying Eq.~\ref{eq:app-subgaussian-tail} with
$s^2=\sigma_{\rm scr}^2/m_{\rm scr}$ and taking a union bound over the
$K$ non-root candidates yields
\begin{equation}
 \Pr\!\left(\exists k\in\{1,\ldots,K\}:V_k\leq0,
       \widehat V_k^{\rm scr}>\beta\mid\mathcal G_t\right)
 \leq K\exp\!\left(-\frac{m_{\rm scr}\beta^2}
                         {2\sigma_{\rm scr}^2}\right).
 \label{eq:app-screening-reuse}
\end{equation}
Root contributes no event because its screening score is exactly zero.
The union bound also applies when candidate errors are correlated.

For confirmation, condition instead on $\mathcal F_t^{\rm pre}$, after
the proposal has been completed. Define
$\xi_t=\widehat V_t^{\rm con}(S_t^+)-V^+$ and assume its conditional
sub-Gaussian proxy is $\sigma_{\rm con}^2/m_{\rm con}$, as in the
theorem. On $\{V^+\leq0\}$, acceptance above $\beta$ implies
$\xi_t>\beta$. Thus
\begin{equation}
 \Pr\!\left(\widehat V_t^{\rm con}(S_t^+)>\beta,\ V^+\leq0
          \mid\mathcal F_t^{\rm pre}\right)
 \leq\exp\!\left(-\frac{m_{\rm con}\beta^2}
                        {2\sigma_{\rm con}^2}\right).
 \label{eq:app-fresh-confirmation}
\end{equation}
This tail bound concerns the single proposal fixed before confirmation.
All selection and absorption decisions are included in the conditioning
history.
\end{proof}

For a prescribed round-level error budget $\delta_t\in(0,1)$, a sufficient
certified threshold is
\begin{equation}
 \tau_t=\sigma_{\rm con}
       \sqrt{\frac{2\log(1/\delta_t)}{m_{\rm con}}}.
 \label{eq:app-certified-margin}
\end{equation}
The threshold and its calibration are fixed before confirmation scoring.
The empirical mode uses $\tau_t=0$, for which the exponential
bound is one, while Eq.~\ref{eq:app-certified-margin} gives the
$\delta_t$-level bound.

For conditionally independent paired observations $X_i=q(S_t^+;z_i)-q(S_t;z_i)\in[-1,1]$ with conditional mean $V^+$, Hoeffding's lemma gives proxy $1/m_{\rm con}$ for their mean, so $\sigma_{\rm con}^2=1$ is valid for exact correctness and consensus matching. Raw negative-NLL utility needs a separate tail assumption.

\subsection{Regret Decomposition and Adaptive Rounds}
\label{app:regret-adaptive}

\begin{corollary}[Regret within the realized candidate set]
\label{cor:regret}
If all native screening errors are at most $\epsilon_{\rm scr}$ in absolute
value and the final proposal's confirmation error is at most
$\epsilon_{\rm con}$, define the absorption loss
$d_{\rm rec}=[V_{k^*}-V^+]_+$. \method with $\tau\geq0$ satisfies
\begin{equation}
 R_t\leq2\epsilon_{\rm scr}+d_{\rm rec}+\epsilon_{\rm con}+\tau.
 \label{eq:regret-bound}
\end{equation}
\end{corollary}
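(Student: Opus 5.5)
The plan is a two-case argument on the confirmation outcome. It rests on one screening inequality that holds in both cases. Write $V^\star=\max\{V_0,V_1,\ldots,V_K,V^+\}$ for the best value in the realized set $\mathcal C_t^{\rm nat}\cup\{S_t,S_t^+\}$, with $V_0=0$ for root. Then $R_t=V^\star-V_t(S_{t+1})$. I take $\mathcal C_t^{\rm nat}$ to be exactly the set of frozen candidates that were screened. Any predeclared prefix checkpoints therefore count among the $K$ screened candidates, which is the setup of Appendix~\ref{app:non-identifiability}.

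\textbf{Step 1 (screening loses at most $2\epsilon_{\rm scr}$).} Let $j\in\arg\max_{0\le k\le K}V_k$. Because $k^*$ maximizes the screening score, $\widehat V^{\rm scr}_{k^*}\ge\widehat V^{\rm scr}_j$ regardless of the tie rule. Using $|\varepsilon_k|\le\epsilon_{\rm scr}$ for native candidates and $\varepsilon_0=0$ for root, this gives
\begin{equation*}
V_{k^*}\ \ge\ \widehat V^{\rm scr}_{k^*}-\epsilon_{\rm scr}\ \ge\ \widehat V^{\rm scr}_{j}-\epsilon_{\rm scr}\ \ge\ V_j-2\epsilon_{\rm scr}.
\end{equation*}
By the definition $d_{\rm rec}=[V_{k^*}-V^+]_+$ we have $V^+\ge V_{k^*}-d_{\rm rec}$. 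Combining,
\begin{equation*}
\max_{0\le k\le K}V_k\ \le\ V^+ +2\epsilon_{\rm scr}+d_{\rm rec}.
\end{equation*}

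\textbf{Step 2 (commit).} Suppose $\widehat V^{\rm con}(S_t^+)>\tau$. Then $S_{t+1}=S_t^+$, so $R_t=\max(\max_kV_k,V^+)-V^+$. By Step 1 this is at most $\max(2\epsilon_{\rm scr}+d_{\rm rec},0)=2\epsilon_{\rm scr}+d_{\rm rec}$. That is below the claimed bound, since $\epsilon_{\rm con}\ge0$ and $\tau\ge0$.

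\textbf{Step 3 (reject).} Suppose $\widehat V^{\rm con}(S_t^+)\le\tau$. Then $S_{t+1}=S_t$ and $V_t(S_{t+1})=0$, so $R_t=\max(\max_kV_k,V^+)$; this is nonnegative because $V_0=0$. The confirmation error bound gives $V^+\le\widehat V^{\rm con}(S_t^+)+\epsilon_{\rm con}\le\tau+\epsilon_{\rm con}$. Step 1 then gives $\max_kV_k\le\tau+\epsilon_{\rm con}+2\epsilon_{\rm scr}+d_{\rm rec}$. Both terms of the maximum are therefore bounded by the right-hand side of Eq.~\ref{eq:regret-bound}.

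For No Recovery, $S_t^+=S_t^{\rm sel}$, so $V^+=V_{k^*}$ and $d_{\rm rec}=0$, as remarked in Section~\ref{sec:theory}.

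The only delicate point is Step 1. It needs the root's screening score to be exactly zero, so that root can serve as the comparator $j$ with no error. It also needs the realized candidate set in $R_t$ to coincide with the screened set. Beyond that, the argument is a deterministic chain of inequalities on the event where the error bounds hold; no probabilistic tools are needed.
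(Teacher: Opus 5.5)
Your proof is correct and follows the paper's argument essentially step for step. It uses the same screening chain $V_{\bar k}\le V_{k^*}+2\epsilon_{\rm scr}\le V^++d_{\rm rec}+2\epsilon_{\rm scr}$ to bound the best realized value, then the same accept/reject split with $V^+\le\tau+\epsilon_{\rm con}$ on rejection. You also make explicit two points the paper leaves implicit: root's exact zero screening score, and that prefix checkpoints count among the screened candidates.
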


\begin{proof}[Proof of Corollary~\ref{cor:regret}]
Assume the screening errors satisfy
$|\varepsilon_k|\leq\epsilon_{\rm scr}$ and the final confirmation error
satisfies $|\xi_t|\leq\epsilon_{\rm con}$. Let
$\bar k\in\arg\max_{0\leq k\leq K}V_k$, and define
$d_{\rm rec}=[V_{k^*}-V^+]_+$.
Screening optimality gives
\begin{equation}
 \begin{aligned}
 V_{\bar k}
 &\leq\widehat V_{\bar k}^{\rm scr}+\epsilon_{\rm scr}
 \leq\widehat V_{k^*}^{\rm scr}+\epsilon_{\rm scr}\\
 &\leq V_{k^*}+2\epsilon_{\rm scr}
 \leq V^++d_{\rm rec}+2\epsilon_{\rm scr}.
 \end{aligned}
 \label{eq:app-regret-selection}
\end{equation}
The last expression also bounds $V^+$ because
$d_{\rm rec},\epsilon_{\rm scr}\geq0$. Hence it bounds the best value
$V_{\max}=\max\{V_0,\ldots,V_K,V^+\}$ among all constructed states.

If the proposal is accepted, inheritance has value $V^+$ and regret is
at most $d_{\rm rec}+2\epsilon_{\rm scr}$. If it is rejected, inheritance
has value zero and
$\widehat V_t^{\rm con}(S_t^+)\leq\tau$, implying
$V^+\leq\tau+\epsilon_{\rm con}$. For $\tau\geq0$, both cases give
\begin{equation}
 R_t\leq2\epsilon_{\rm scr}+d_{\rm rec}+\epsilon_{\rm con}+\tau.
 \label{eq:app-regret-bound}
\end{equation}
\end{proof}

The absorption term measures decreases from recipient to proposal. It is
zero for No Recovery and whenever absorption has nonnegative
recipient-relative gain.

Across rounds, let $E_t^{\rm bad}$ denote accepting a proposal with
$V_t(S_t^+)\leq0$. For $\delta\in(0,1)$, choose prescribed positive budgets satisfying
$\sum_t\delta_t\leq\delta$. Applying the conditional confirmation bound
in each round, then taking expectations and a union bound, gives
\begin{equation}
 \Pr\!\left(\bigcup_t E_t^{\rm bad}\right)
 \leq\sum_t\E\!\left[
      \Pr(E_t^{\rm bad}\mid\mathcal F_t^{\rm pre})\right]
 \leq\sum_t\delta_t\leq\delta.
 \label{eq:app-adaptive-budget}
\end{equation}
For confirmation rounds indexed by $t\geq1$, the allocation
$\delta_t=\delta/[t(t+1)]$ supports an unbounded horizon because the
series sums to $\delta$. Conditioning permits each round's construction
to depend on earlier outcomes, while its confirmation evidence satisfies
the fresh conditional error assumption.

%% file: appendix/task_selection_inheritance_audits.tex
\section{Audits of task selection and update inheritance}
\label{app:selection-inheritance-audits}

\paragraph{Fresh-response and realized-update audits.}
The task-selection audit freezes the checkpoint, decoder, selected target, and
admission decision before drawing an independent response batch.
Selected-target optimism is $100(\widehat p_{\rm sel}-\widehat p_{\rm fresh})$,
with fresh responses scored against the original target, and the fixed-target
control chooses its target independently of the responses that measure
support. The admission audit counts originally admitted tasks that leave the
native frontier under fresh responses: \AgentFalseFrontier\% for Agent0 and
\RZeroFalseFrontier\% for R-Zero. The exact-valid audit measures 96 realized
Skill-SP optimizer transitions with paired held-out correct-target losses,
holding the root, training batch, optimizer state, step budget, and loss mask
fixed for each comparison. It is separate from the 64-update pools below.

\paragraph{Update inheritance across native sources.}
Table~\ref{tab:cc-full} gives the complete audit behind
Figure~\ref{fig:keyresults}c,d. Commit-all inherits every update, and matched
random commits a random subset at \method's coverage and reports expected
values. Coverage is the committed fraction of updates, useful retention the
fraction of useful updates kept, harmful commits the fraction of accepted
updates with negative gain, and pool gain the mean NLL reduction over the full
pool with a rejection counted as zero.

\begin{table}[!htbp]
  \centering
  \papertablefont
  \caption{\textbf{Complete \method audit across native update sources.}
  Matched random uses \method's coverage within each host. Gain and
  $\Delta$ Native are in $10^{-3}$ NLL units; higher is better.}
  \label{tab:cc-full}
  \input{tables/cc_main}
\end{table}

%% file: tables/cc_main.tex
\setlength{\tabcolsep}{5pt}
\renewcommand{\arraystretch}{1.08}
\begin{tabular}{@{}llcccrr@{}}
\toprule
Host & Persistence rule & Coverage & \shortstack{Useful\\retained} & \shortstack{Harmful\\commits} & \shortstack{Mean gain\\($10^{-3}$ NLL)} & \shortstack{$\Delta$ Native\\($10^{-3}$ NLL)} \\
\midrule
\multirow{3}{*}{Skill-SP}
 & Native commit-all & \SkillHeirNativeCoverage & \SkillHeirNativeUseful & \SkillHeirNativeHarmful & \SkillHeirNativeGainMilli & $0.00$ \\
 & Matched random & \SkillHeirCoverage & \SkillHeirRandomUseful & \SkillHeirRandomHarmful & \SkillHeirRandomGainMilli & \SkillHeirRandomVsNativeMilli \\
 & \method & \SkillHeirCoverage & \SkillHeirUseful & \SkillHeirHarmful & \SkillHeirGainMilli & \SkillHeirDeltaMilli \\
\midrule
\multirow{3}{*}{R-Zero}
 & Native commit-all & \RZeroHeirNativeCoverage & \RZeroHeirNativeUseful & \RZeroHeirNativeHarmful & \RZeroHeirNativeGainDisplay & $0.00$ \\
 & Matched random & \RZeroHeirCoverage & \RZeroHeirRandomUseful & \RZeroHeirRandomHarmful & \RZeroHeirRandomGainDisplay & \RZeroHeirRandomVsNativeDisplay \\
 & \method & \RZeroHeirCoverage & \RZeroHeirUseful & \RZeroHeirHarmful & \RZeroHeirGainDisplay & \RZeroHeirDeltaDisplay \\
\midrule
\multirow{3}{*}{Agent0}
 & Native commit-all & \AgentHeirNativeCoverage & \AgentHeirNativeUseful & \AgentHeirNativeHarmful & \AgentHeirNativeGainMilli & $0.00$ \\
 & Matched random & \AgentHeirCoverage & \AgentHeirRandomUseful & \AgentHeirRandomHarmful & \AgentHeirRandomGainMilli & \AgentHeirRandomVsNativeMilli \\
 & \method & \AgentHeirCoverage & \AgentHeirUseful & \AgentHeirHarmful & \AgentHeirGainMilli & \AgentHeirDeltaMilli \\
\bottomrule
\end{tabular}

%% file: appendix/extended_self_evolution_results.tex
\section{Extended self-evolution results}
\label{app:extended-results}

\begin{figure}[t]
  \centering
  \includegraphics[width=\textwidth]{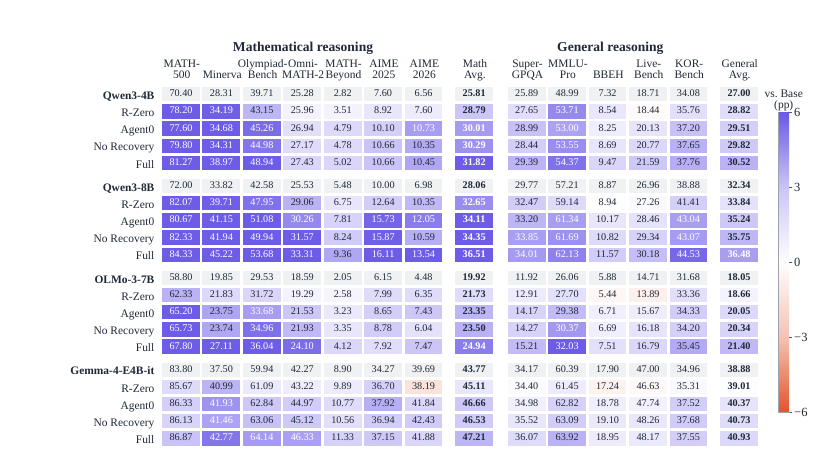}
  \caption{\textbf{Complete self-evolution results (\%).} Every benchmark,
  backbone, and method after ten rounds. Numbers are three-seed means, and
  color shows the change from each backbone's starting checkpoint (Base, gray
  rows).}
  \label{fig:complete-results}
\end{figure}

\paragraph{Benchmarks and scoring.}
The mathematical suite contains
MATH-500~\citep{hendrycks2021measuringmathematicalproblemsolving},
Minerva~\citep{lewkowycz2022solvingquantitativereasoningproblems},
OlympiadBench~\citep{he2024olympiadbenchchallengingbenchmarkpromoting},
Omni-MATH-2~\citep{ballon2026benchmarkssaturatemodelgets},
MATH-Beyond~\citep{mayilvahanan2025mathbeyondbenchmarkrlexpand}, and AIME
2025/2026, with 500, 272, 674, 4,159, 146, 30, and 30 evaluated items after
normalized de-duplication. The first five use one greedy response and
exact-answer accuracy. Each AIME year uses 32 responses per problem at
temperature $0.6$, top-$p=0.95$, and top-$k=20$, and mean response
correctness estimates pass@1. The general suite contains
SuperGPQA~\citep{pteam2025supergpqascalingllmevaluation},
MMLU-Pro~\citep{wang2024mmluprorobustchallengingmultitask},
BBEH~\citep{kazemi2025bigbenchextrahard}, LiveBench
Reasoning~\citep{white2025livebenchchallengingcontaminationlimitedllm}, and
KOR-Bench~\citep{ma2025korbenchbenchmarkinglanguagemodels}, with 26,529,
12,032, 4,520, 200, and 1,250 items, one greedy response each, and
per-question accuracy, except that LiveBench averages its three Reasoning
tasks and KOR-Bench its five categories. Math Avg.\ and General Avg.\ weight
their benchmarks equally, and every evolved score is a three-seed mean after
ten rounds.

%% file: appendix/parameter_experience_inheritance.tex
\paragraph{Donor--recipient interventions.}
\label{app:donor-interventions}
Each intervention fixes the root $S_t$, the recipient $S_t^{\rm sel}$, a
donor $S^{\rm don}$, the qualification rule, and the absorption budget, and
absorbs only that donor's qualified experience,
$S^{\rm abs}=\mathcal{R}(S_t^{\rm sel};D_{\rm don})$.
Figure~\ref{fig:experience-value}a plots the donor gain
$x_{\rm don}=100\,\E[q(S^{\rm don};z)-q(S_t;z)]$ against the absorption gain
$y_{\rm abs}=100\,\E[q(S^{\rm abs};z)-q(S_t^{\rm sel};z)]$, with exact-answer
correctness $q$ on independently evaluated items. The donor axis is measured
against the current root, including for archived donors, and each reference is
attributed to its generating checkpoint.

\paragraph{Objective controls.}
\label{app:objective-controls}
A separate fixed-recipient diagnostic uses 2,400 problems, of which the
recipient initially solves 1,224 (51\%), with three absorption-order seeds that
share the recipient, problems, and references. Repair only sets
$\lambda_{\rm keep}=0$ with the same trajectories and normalization. Recovery
has a net gain of $3.57\pm0.30$ pp, against $1.11\pm0.86$ for Cache SFT and
$2.26\pm0.45$ for Repair only, because it loses far fewer existing successes,
as Figure~\ref{fig:recovery-controls}a shows. A larger preservation weight
lowers both acquisition and forgetting in
Figure~\ref{fig:recovery-controls}b, and $\lambda_{\rm keep}=1$ gives the
largest net gain.

\paragraph{Matched source replacement.}
\label{app:history-resource}
Three seed groups with four late-round recipients each compare 32 current
records plus 32 qualified records from non-inherited sources with the same
current records plus same-age records from inherited sources, matching the
record count and the repair and keep group sizes. Age zero uses identical
material. Pointwise 95\% intervals in Figure~\ref{fig:experience-value}d use
2,000 hierarchical paired bootstrap draws that resample seed groups and then
recipients. Figure~\ref{fig:recovery-controls}c shows the two conditions'
absorption gains, and Figure~\ref{fig:recovery-controls}d shows that Recovery
beats Cache SFT at every recipient checkpoint of
Figure~\ref{fig:experience-value}.

\begin{figure}[!htbp]
  \centering
  \includegraphics[width=\textwidth]{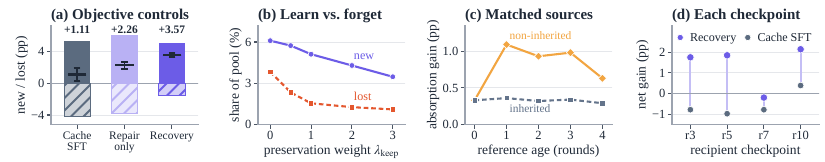}
  \caption{\textbf{Supplementary interventions.} (a) Objective controls on a
  fixed recipient and 2,400 problems, with mean $\pm$ SD net gain over three
  absorption-order seeds. (b) New and lost successes as the preservation weight
  changes. (c) Absorption gain with non-inherited or same-age inherited history
  at 64 records. (d) Net gain of Recovery and Cache SFT at each recipient
  checkpoint.}
  \label{fig:recovery-controls}
  \label{fig:source-replacement}
\end{figure}

%% file: appendix/compute_performance.tex
\section{Compute and budget accounting}
\label{app:compute-performance}
\label{app:compute-accounting}
\label{app:compute-sensitivity}

\paragraph{Hardware and elapsed time.}
Training used two NVIDIA RTX PRO 6000 Blackwell Server Edition GPUs with about
96~GB of memory each. A Full run took about 8--9 wall-clock hours per round, or
80--90 hours for ten rounds, and benchmark evaluation took about 4--5 hours in
total. The training-side cost includes task generation, native branch
training, recipient attempts, verification, teacher and student passes,
screening, and confirmation, also in rounds that retain the root. The
fixed-round results in Table~\ref{tab:self-evolution} use the same horizon and
three-seed aggregation across methods.

\paragraph{Accounting budget.}
Figure~\ref{fig:budget-sensitivity} allocates native training under a
normalized accounting budget. A single-stream native baseline uses 100.0 units,
and No Recovery and Full use 111.9 and 123.7 units in their default ten-round
configurations. Generation, Scout optimization, and native learner optimization
account for 38.0, 6.0, and 51.0 units in each configuration. Full adds
screening (4.8), recipient attempts (3.2), verification (2.8), teacher passes
(2.4), absorption (5.2), confirmation (6.4), and state export (3.9). Budget 1.0
equals Full's 123.7 units. Every variant keeps ten rounds and the same
generation and Scout schedule, with fixed costs and branch counts of
$(49.0,1)$ for the native baseline, $(64.1,4)$ for the more-candidates control,
$(60.9,2)$ for No Recovery, and $(72.7,2)$ for Full. With cap $C$ and $K$
native branches, each branch receives
$n_{\rm native}=\lfloor (C-C_{\rm fixed})/(10K\cdot0.17)\rfloor$ calls per
round, set before observing results. The sensitivity sweeps keep
$m_{\rm scr}=16$ and the Recovery caps fixed, use $m_{\rm con}=32$ for the
candidate sweep and $K=2$ for the confirmation sweep, and charge a fixed cost
of $63.1+1.6K+0.2m_{\rm con}$ units. Accounting units are not measured
GPU-hours or FLOPs.

%% file: appendix/prompt_templates.tex
\section{Prompt templates and input formats}
\label{app:prompt-templates}
\newtcolorbox{promptpanel}[1]{%
  arc=1pt, boxrule=.4pt, colframe=ink!40, colback=white,
  colbacktitle=ink!8, coltitle=ink!75, fonttitle=\small,
  left=6pt, right=6pt, top=3pt, bottom=3pt,
  before skip=5pt, after skip=5pt,
  fontupper=\ttfamily\scriptsize\raggedright
    \setlength{\parskip}{.45\baselineskip},
  title={#1}}

The Curriculum Scout proposes tasks, and the Evolving Learner trains,
attempts problems, and receives Recovery updates, as in
Appendix~\ref{app:native-hyperparameters}.

\subsection{Native problem generation}
\label{app:prompt-native}

The Curriculum Scout uses the same two messages in every round, for native
training and for candidate-pool generation. Its proposed boxed answer is not a
training label: native admission assigns the consensus label of the Evolving
Learner's responses.

\begin{promptpanel}{Curriculum Scout: system message}
Construct an original, challenging competition-mathematics problem.

Work out the problem and its answer privately before writing your response.
You may draw on algebra, geometry, number theory, combinatorics, prealgebra,
probability, statistics, calculus, or another mathematical area. Choose a
challenge that fewer than 30\% of advanced high-school students would solve.
Familiar textbook exercises and recognizable contest questions should not be
reused.

Your response must contain two parts in this order: the problem statement
enclosed by \textless question\textgreater{} and
\textless /question\textgreater{}, then its final answer enclosed by
\verb|\boxed{}|. Do not include your planning, a worked solution, or any
other commentary. Follow this layout:

\textless question\textgreater\\
\{problem\_statement\}\\
\textless /question\textgreater

\verb|\boxed{final_answer}|
\end{promptpanel}

\begin{promptpanel}{Curriculum Scout: user message}
Write one problem now. Use the required statement-and-answer layout, with no
additional text.
\end{promptpanel}

\subsection{Evolving Learner problem-solving input}
\label{app:prompt-evolution}

Native rollouts, admission responses, recipient attempts, and the screening
and confirmation pools use the same instruction, with the problem as the user
message.

\begin{promptpanel}{Evolving Learner: problem-solving system message}
Develop a solution with explicit intermediate reasoning. Conclude with the
resulting answer in \verb|\boxed{}|.
\end{promptpanel}

\begin{promptpanel}{Evolving Learner: problem-solving user message}
\{problem\_text\}
\end{promptpanel}

\subsection{Reference-conditioned repair input}
\label{app:prompt-repair}

The student and the keep teacher see only the input above. The repair teacher
also receives one qualified reference, which may come from an unselected
branch or an earlier round, and runs forward passes on the recipient's fixed
recorded response, aligned token by token with the student.

\begin{promptpanel}{Repair-teacher context}
system: Develop a solution with explicit intermediate reasoning. Conclude
with the resulting answer in \verb|\boxed{}|.\\
user: A successful training example is provided below.\\
\{qualified\_reference\_response\}

Use this example as guidance when solving the following problem:\\
\{problem\_text\}\\
assistant:
\end{promptpanel}

\subsection{Answer verification and confirmation}
\label{app:prompt-verification}

Recovery eligibility uses training-role records, response completion, answer
verification, and reference provenance, and excludes known contradictory
problems, wrong labels, and erroneous references. Screening and confirmation
are numerical decisions on fresh responses: the runs contain no
natural-language judge prompt and no model-generated accept or reject
decision. External benchmarks use their frozen standard instructions, and
their prompts, responses, and results never enter training, memory, screening,
or confirmation.
\label{app:prompt-external}